\newcommand{\defeq}{\vcentcolon=}
\newcommand{\eqdef}{=\vcentcolon}
\newcommand{\norm}[1]{\left\|{#1}\right\|}
\newcommand{\rank}{\operatorname{rank}}
\newcommand{\vect}{\operatorname{vec}}
\newcommand{\var}{\operatorname{Var}}
\newcommand{\ind}[1]{\mathbf{1}\left(#1\right)}
\newcommand{\calB}{\mathcal{B}}
\newcommand{\rmd}{\mathrm{d}}
\newcommand{\rmF}{\mathrm{F}}
\newcommand{\bbR}{\mathbb{R}}
\newcommand{\bbE}{\mathbb{E}}
\newcommand{\bbP}{\mathbb{P}}
\newcommand{\tU}{\widetilde{U}}
\newcommand{\tV}{\widetilde{V}}
\newcommand{\ba}{\bar{a}}
\newcommand{\bb}{\bar{b}}
\newcommand{\ta}{\tilde{a}}
\newcommand{\tb}{\tilde{b}}
\newcommand{\hX}{\widehat{X}}
\newcommand{\hU}{\widehat{U}}
\newcommand{\hV}{\widehat{V}}
\newcommand{\hSigma}{\widehat{\Sigma}}
\newtheorem{theorem}{Theorem}
\newtheorem{lemma}{Lemma}
\newtheorem{corollary}{Corollary}
\newtheorem{proposition}{Proposition}
\title{Joint Dimensionality Reduction\\for Two Feature Vectors}
\author{
  Yanjun~Li\\
  CSL and Department of ECE\\
  University of Illinois, Urbana-Champaign\\
  Urbana, IL 61801 \\
  \texttt{yli145@illinois.edu} \\
  \And
  Yoram~Bresler\\
  CSL and Department of ECE\\
  University of Illinois, Urbana-Champaign\\
  Urbana, IL 61801 \\
  \texttt{ybresler@illinois.edu} \\
}
\begin{document}

\maketitle

\begin{abstract}
Many machine learning problems, especially multi-modal learning problems, have two sets of distinct features (e.g., image and text features in news story classification, or neuroimaging data and neurocognitive data in cognitive science research). This paper addresses the joint dimensionality reduction of two feature vectors in supervised learning problems. In particular, we assume a discriminative model where low-dimensional linear embeddings of the two feature vectors are sufficient statistics for predicting a dependent variable. We show that a simple algorithm involving singular value decomposition can accurately estimate the embeddings provided that certain sample complexities are satisfied, without specifying the nonlinear link function (regressor or classifier). 
The main results establish sample complexities under multiple settings. Sample complexities for different link functions only differ by constant factors.
\end{abstract}

\section{Introduction}
Dimensionality reduction (also known as low-dimensional embedding) is used in machine learning to select and extract features from high dimensional data. Unsupervised learning techniques aim to embed high-dimensional data into low-dimensional features that most accurately represent the original data. The literature on this topic is vast, from classical methods, such as principal component analysis (PCA) and multidimensional scaling (MDS), to more recent approaches, such as Isomap and locally-linear embedding \cite{Tenenbaum2000,Roweis2000}. On the other hand, supervised learning techniques -- a long line of work including linear discriminant analysis (LDA) and canonical correlation analysis (CCA) -- extract features from one set of variables that are most relevant to another set of variables. A related problem is variable selection (also known as feature selection), which selects a subset of active predictors that are relevant to the task. 

In many real-world machine learning problems, there exist two sets of features with distinct characteristics. For example, while intuitively widely different, both text and images are critical features in machine learning tasks related to news articles \cite{Ramisa2016}. As another example, cognitive science research heavily relies on both neurocognitive data and neuroimaging data, which again are widely different \cite{Noreika2013}. 
This paper studies joint dimensionality reduction of such feature vectors in supervised learning, where an unknown discriminative model $p(y|a,b)$ has two feature vectors $a$ and $b$. We extract two sets of low-dimensional features that are linear combinations of entries in $a$ and $b$, respectively. The linear mappings from $a$ or $b$ to the corresponding features are called \emph{linear embeddings}, which are essentially captured by two subspaces that we call \emph{dimensionality reduction subspaces}. The two embeddings recovered simultaneously do \emph{not} mix the information from $a$ and $b$, leading to more interpretable features crucial to data mining tasks \cite{Chang2015}.
We use a very simple algorithm that involves singular value decomposition (SVD) to estimate the two low-dimensional linear embeddings from i.i.d. samples of the independent variables $a,b$ and the dependent variable $y$. This algorithm does not require any knowledge of underlying model $p(y|a,b)$. Our main results establish the sample complexities under which the embeddings can be accurately estimated. Assume that the ambient dimension of the original data (i.i.d. samples of $a$ and $b$) are $n_1$ and $n_2$, respectively, and we hope to extract $r$ features from each. Then, the sampling complexities for our dimensionality reduction algorithms are as follows:
\begin{enumerate}
	\item In the simple setting where the embeddings are unstructured, $O(n_1n_2)$ samples are sufficient to estimate the $r$-dimensional embeddings accurately (\emph{Section \ref{sec:dr}}).
	\item If $s_1$ (resp. $s_2$) variables are selected from $n_1$ (resp. $n_2$) variables, and are in turn reduced to $r$ features each, then the required sample complexity is $O(s_1s_2\log n_1 \log n_2)$ (\emph{Section \ref{sec:vs}}).
	\item If the dependent variable $y$ is a light-tailed random variable, $O((n_1+n_2)\log^6(n_1+n_2))$ samples are sufficient (\emph{Section \ref{sec:optimal}}).
\end{enumerate}
These sample complexity results hold under mild conditions. Here, we assume that $r=O(1)$ for simplicity, so that the explicit dependencies of the sample complexities on $r$ are hidden. As an example, we derive such explicit dependencies for the bilinear link function in Section \ref{sec:bivariate}.  

The estimators in this paper can serve several practical purposes. First, the linear embeddings extract features that best explain the dependent variable, which is of interest to many data mining problems. Secondly, by reducing the number of variables, low-dimensional embeddings challenge the curse of dimensionality and enable faster and more robust training in subsequent stages. Lastly, even if the embedding estimates are error-prone due to lack of a sufficient number of samples, they can be used to initialize more sophisticated training algorithms. For example, in a neural network setting, the embeddings in this paper are estimates of weights in the first layer of the network, which is a method of pre-training \cite{Hinton2006}. Then the weights can be fine-tuned using back propagation.

There has been a long line of research in supervised dimensionality reduction, to name a few examples, sliced inverse regression (SIR) \cite{Li1991}, principal Hessian direction (pHd) \cite{Li1992}, sliced average variance estimation (SAVE) \cite{Cook2000}, and minimum average variance estimation (MAVE) \cite{Xia2002}. However, none of these approaches studies the joint dimensionality reduction of two feature vectors. When the link function is odd in both variables (e.g., a bilinear function), SIR, pHd, and SAVE cannot recover the embeddings. MAVE is based on local linear approximations, hence it is not applicable to non-smooth link functions. Recently, Plan et al. \cite{Plan2014} studied the generalized linear model, which corresponds to extracting one feature from one vector. We extend their analysis to jointly extracting multiple features from two vectors. Our approach is a new member in the family of supervised dimensionality reduction algorithms, which applies to multi-modal learning problems and overcomes the drawbacks of previous approaches in this setting.


\subsection{Linear Estimator for Bilinear Regression}\label{sec:bilinear}
As a warmup, we review an interesting result for bilinear regression.
Suppose random variable $y\in\bbR$ satisfies $y = \left<ab^T,X\right> = a^TXb$,
where random variables $a\in\bbR^{n_1}$ and $b\in\bbR^{n_2}$ are independent, following probability distributions that satisfy isotropy: $\bbE[a_i a_i^T]=I_{n_1}$ and $\bbE[b_i b_i^T]=I_{n_2}$, respectively (e.g., $a_i\sim N(0,I_{n_1})$ and $b_i\sim N(0,I_{n_2})$). The matrix $X\in\bbR^{n_1\times n_2}$ is fixed but unknown. Given $m$ i.i.d. observations $\{y_i\}_{i=1}^m$, $\{a_i\}_{i=1}^m$, and $\{b_i\}_{i=1}^m$, $\hX_\mathrm{lin} \defeq \frac{1}{m} \sum_{i=1}^{m} a_i y_i b_i^T$ is an unbiased linear estimator of $X$:
\[
\bbE[\hX_\mathrm{lin}] =  \frac{1}{m}\sum_{i=1}^{m} \bbE[a_i y_i b_i^T] = \bbE[a_1 y_1 b_1^T] = \bbE[a_1 a_1^T X b_1 b_1^T] =  \bbE[a_1 a_1^T]\cdot X \cdot\bbE[b_1 b_1^T] = X.
\]
In some applications, we have prior knowledge of the matrix $X$ -- it belongs to a subset $\Omega$ of $\bbR^{n_1\times n_2}$, for example, $X$ has at most rank $r$, or has at most $s_1$ nonzero rows and at most $s_2$ nonzero columns. Then one can project the linear estimator onto the subset, obtaining a nonlinear estimator $\hX = P_{\Omega}\hX_\mathrm{lin}$. This estimator is used to initialize algorithms for matrix recovery with rank-1 measurement matrices (e.g., phase retrieval and blind deconvolution via lifting \cite{Netrapalli2013,Lee2015a}).

\subsection{Learning with Two Feature Vectors}\label{sec:nonlinear}
Suppose random variable $y$ depends on $a$ and $b$ only through $U^T a$ and $V^T b$, i.e. we have the following Markov chain:
\begin{align}
(a,b) \rightarrow (U^T a, V^T b) \rightarrow y, \label{eq:markov}
\end{align}
where $U\in \bbR^{n_1\times r}$ and $V\in \bbR^{n_2\times r}$ are unknown tall matrices.
In machine learning, $p\left(y\middle|a,b\right)=p\left(y\middle| U^Ta,V^Tb\right)$ corresponds to the discriminative model. In communication, it corresponds to a multiple-inputs-single-output (MISO) channel with inputs $U^Ta,V^Tb$ and output $y$. Clearly, there exists a deterministic bivariate functional $f(\cdot,\cdot)$ such that $\bbE[y|a,b] = \mu = f(U^T a, V^T b)$, and the randomness of $\mu$ comes from $U^T a$ and $V^T b$. Moreover, assume that
\begin{align}
\var[y|a,b] \leq \sigma_{y|a,b}^2, \label{eq:con_var}
\end{align}
where $\sigma_{y|a,b}^2$ is a constant upper bound for the conditional variance. When $y=f(U^T a, V^T b) = \left<U^T a,V^T b\right>$, the above nonlinear regression reduces to the bilinear regression in Section \ref{sec:bilinear}, for which $X = UV^T$.

In a special case, $y$ depends on $a$ and $b$ only through $\mu$ (rather than through $U^Ta$ and $V^T b$), i.e.,
\begin{align}
(a,b) \rightarrow (U^T a, V^T b) \rightarrow \mu = f(U^T a, V^T b) \rightarrow y, \label{eq:markov2}
\end{align}
We give two examples of the conditional distribution $p(y|\mu)$:
\begin{enumerate}
	\item Gaussian distribution. Let $y = \mu + z$, where $z \sim N(0,\sigma_z^2)$.	This corresponds to additive Gaussian noise in the observation, and the tightest bound is $\sigma_{y|a,b}^2 = \sigma_z^2$.
	\item Bernoulli distribution. In binary classification, the conditional mean $\mu$ of the binary label $y$ belongs to the interval $[0,1]$, and
	\[
	y = \begin{cases}
	1 &~ \text{w.p.}~ \mu\\
	0 &~ \text{w.p.}~ 1-\mu
	\end{cases}
	\quad \sim \operatorname{Ber}(\mu).
	\]
	Hence $\sigma_{y|a,b}^2 = \max\limits_{\mu\in[0,1]}\mu(1-\mu) = \frac{1}{4}$. The conditional mean in this model can take many forms, two of which are:
	\begin{itemize}
	\item \emph{Logistic-type function} $\mu = 1/\left(1+e^{-g(U^T a, V^T b)}\right)$.
	\item \emph{Indicator-type function} $\mu = \epsilon+(1-2\epsilon)\cdot\ind{g(U^T a, V^T b)>0}$, where $\ind{\cdot}$ denotes the indicator function, and $\epsilon$ denotes noise in the labels. When $\epsilon = 0$, $\mu$ is either $1$ or $0$, and all samples are correctly labeled. When $\epsilon \in (0,\frac{1}{2})$, $\mu$ is either $1-\epsilon$ or $\epsilon$, and every sample is mislabeled with probability $\epsilon$.
	\end{itemize}
\end{enumerate}
In the rest of the paper, we assume only \eqref{eq:markov} and \eqref{eq:con_var} in our analysis. The sole purpose of the special case \eqref{eq:markov2} is to demonstrate the connections of our model with various machine learning models. 
Estimation of $U$ and $V$ corresponds to joint dimensionality reduction of two feature vectors, which plays an important role in machine learning with high-dimensional multi-modal data. Once we estimate $U$ and $V$, the number of input random variables are reduced from $n_1+n_2$ to $2r$.

\section{Dimensionality Reduction}\label{sec:dr}
Suppose $a\in\bbR^{n_1}$ and $b\in\bbR^{n_2}$ follow Gaussian distributions $N(0,I_{n_1})$ and $N(0,I_{n_2})$, respectively. 
We establish the following interesting result: given i.i.d. observations $\{y_i\}_{i=1}^m$, $\{a_i\}_{i=1}^m$, and $\{b_i\}_{i=1}^m$, we can estimate the subspaces encoded by $U$ and $V$, even if the nonlinear functional $f(\cdot,\cdot)$ is unspecified or nonparametric. 

Without loss of generality, we assume that $U$ and $V$ have orthonormal columns. Let $\tU\in\bbR^{n_1\times(n_1-r)}$ and $\tV\in\bbR^{n_2\times(n_2-r)}$ be matrices of orthonormal columns that satisfy $U^T\tU=0$, $V^T\tV = 0$, i.e., the columns of $\tU$ and $\tV$ span the orthogonal complements of the subspaces spanned by the columns of $U$ and $V$. Define $\ba_i \defeq U^T a_i$, $\ta_i \defeq \tU^T a_i$, $\bb_i \defeq V^T b_i$, and $\tb_i \defeq \tV^T b_i$.
\begin{lemma}\label{lem:r_ind}
$\{\ba_i\}_{i=1}^m$, $\{\ta_i\}_{i=1}^m$, $\{\bb_i\}_{i=1}^m$, and $\{\tb_i\}_{i=1}^m$ are all independent Gaussian random vectors. Moreover, $\ba_i\sim N(0,I_{r})$, $\ta_i\sim N(0,I_{n_1-r})$, $\bb_i\sim N(0,I_{r})$, $\tb_i\sim N(0,I_{n_2-r})$.
\end{lemma}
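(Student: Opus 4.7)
The plan is to prove the lemma by reducing everything to the basic facts that (i) affine transformations of Gaussian vectors are Gaussian, (ii) for jointly Gaussian vectors, zero cross-covariance is equivalent to independence, and (iii) the samples $a_i$ and $b_j$ are independent across indices and modalities by assumption.

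First I would stack $U$ and $\tU$ into the single $n_1\times n_1$ matrix $Q_1 \defeq [U~\tU]$, and similarly set $Q_2 \defeq [V~\tV]$. Because $U$ has orthonormal columns, $\tU$ has orthonormal columns, and $U^T\tU = 0$, the matrix $Q_1$ is orthogonal (i.e., $Q_1^T Q_1 = I_{n_1}$), and likewise for $Q_2$. Then I would write
\[
\begin{bmatrix}\ba_i\\ \ta_i\end{bmatrix} = Q_1^T a_i, \qquad \begin{bmatrix}\bb_i\\ \tb_i\end{bmatrix} = Q_2^T b_i.
\]
Since $a_i\sim N(0,I_{n_1})$, the vector $Q_1^T a_i$ is Gaussian with mean $0$ and covariance $Q_1^T Q_1 = I_{n_1}$; reading off the block structure gives $\ba_i\sim N(0,I_r)$, $\ta_i\sim N(0,I_{n_1-r})$, and $\bbE[\ba_i\ta_i^T] = U^T\tU = 0$, which by joint Gaussianity implies $\ba_i$ and $\ta_i$ are independent. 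The analogous argument with $Q_2$ yields the corresponding statements for $\bb_i$ and $\tb_i$.

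Next I would handle independence across the four families. Independence across the sample index $i$ is immediate: $(\ba_i,\ta_i,\bb_i,\tb_i)$ is a deterministic linear function of $(a_i,b_i)$, which are i.i.d.\ across $i$. Independence between the $a$-derived quantities $(\ba_i,\ta_i)$ and the $b$-derived quantities $(\bb_j,\tb_j)$ follows from the standing assumption that $\{a_i\}$ and $\{b_j\}$ are independent. Combining these observations with the within-sample independence established in the previous paragraph gives full mutual independence of the four collections $\{\ba_i\},\{\ta_i\},\{\bb_i\},\{\tb_i\}$.

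There is no real obstacle here; the only subtlety worth stating explicitly is that zero cross-covariance implies independence only because the pair $(\ba_i,\ta_i)$ is jointly Gaussian (guaranteed by the linear transformation $Q_1^T a_i$ of a Gaussian), and similarly for $(\bb_i,\tb_i)$. All remaining steps are routine block-matrix bookkeeping.
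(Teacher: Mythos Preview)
Your proposal is correct and follows essentially the same approach as the paper: both arguments compute cross-covariances such as $\bbE[\ba_i\ta_i^T]=U^T\tU=0$ and invoke the fact that uncorrelated jointly Gaussian vectors are independent, together with the assumed independence of $\{a_i\}$ and $\{b_j\}$. Your packaging via the orthogonal matrices $[U~\tU]$ and $[V~\tV]$ is a bit more explicit, but note that the symbol $Q$ is already used in the paper for $\bbE[\ba_1 f(\ba_1,\bb_1)\bb_1^T]$, so you should rename $Q_1,Q_2$ to avoid a clash.
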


From now on, we assume $f(\cdot,\cdot)$ is such that the following quantities are finite:
\begin{align}\label{eq:defQ}
\begin{split}
Q \defeq \bbE\left[\ba_1 f(\ba_1,\bb_1) \bb_1^T \right],& \quad \sigma^2 \defeq \bbE\left[\norm{\ba_1 f(\ba_1,\bb_1) \bb_1^T-Q}_\rmF^2 \right],\\ 
\tau_0^2 \defeq \bbE\left[\left|f(\ba_1,\bb_1)\right|^2\right], \quad \tau_1^2 \defeq \bbE&\left[\norm{\ba_1f(\ba_1,\bb_1)}_2^2\right], \quad \tau_2^2 \defeq \bbE\left[\norm{\bb_1f(\ba_1,\bb_1)}_2^2 \right].
\end{split}
\end{align}
By Lemma \ref{lem:r_ind}, $\ba_1\sim N(0,I_r)$ and $\bb_1\sim N(0,I_r)$ are independent Gaussian random variables. Therefore, $Q,\sigma,\tau_0,\tau_1,\tau_2$ are constants that only depend on $f(\cdot,\cdot)$ and $r$, and not on $n_1$, $n_2$, and $m$. 

Theorem \ref{thm:lin} shows that
\begin{align}
\hX_\mathrm{lin} \defeq \frac{1}{m} \sum_{i=1}^{m} a_i y_i b_i^T \label{eq:lin}
\end{align}
is an unbiased linear estimator of $X = UQV^T$.

\begin{theorem}\label{thm:lin}
The linear estimator $\hX_\mathrm{lin}$ in \eqref{eq:lin} satisfies:
\[
\bbE \left[ \hX_\mathrm{lin} \right] = X \defeq UQV^T,
\]
\[
\bbE \left[ \norm{\hX_\mathrm{lin}-X}_\rmF^2 \right] \leq \frac{1}{m}\left[n_1n_2\sigma_{y|a,b}^2 + \sigma^2+(n_1-r)(n_2-r)\tau_0^2+(n_2-r)\tau_1^2+ (n_1-r)\tau_2^2\right].
\]
\end{theorem}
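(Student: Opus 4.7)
The plan is to split the claim into an unbiasedness calculation and a second-moment calculation, and in both cases to exploit the orthogonal decompositions $a_1 = U\ba_1 + \tU\ta_1$ and $b_1 = V\bb_1 + \tV\tb_1$, together with the independence of the four Gaussian pieces $\ba_1,\ta_1,\bb_1,\tb_1$ guaranteed by Lemma~\ref{lem:r_ind}. The other key device is to peel off the conditional mean: writing $y_1 = f(\ba_1,\bb_1) + e_1$ with $e_1 \defeq y_1 - f(\ba_1,\bb_1)$, we have $\bbE[e_1\mid a_1,b_1] = 0$ and $\bbE[e_1^2\mid a_1,b_1] \le \sigma_{y|a,b}^2$ by \eqref{eq:con_var}.

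\textbf{Unbiasedness.} Since the samples are i.i.d., $\bbE[\hX_{\mathrm{lin}}] = \bbE[a_1 y_1 b_1^T]$. Conditioning on $(a_1,b_1)$ makes the contribution of $e_1$ vanish, so $\bbE[a_1 y_1 b_1^T] = \bbE[a_1 f(\ba_1,\bb_1) b_1^T]$. Substituting the orthogonal expansions of $a_1$ and $b_1$ expands this into four terms. By Lemma~\ref{lem:r_ind}, $\ta_1$ and $\tb_1$ are independent of $(\ba_1,\bb_1)$ and have mean zero, which kills the three cross terms; the surviving term is $U\,\bbE[\ba_1 f(\ba_1,\bb_1)\bb_1^T]\,V^T = UQV^T = X$.

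\textbf{Second moment.} Because the samples are i.i.d.\ and $\hX_{\mathrm{lin}}$ is unbiased,
\[
\bbE\bigl[\norm{\hX_{\mathrm{lin}} - X}_\rmF^2\bigr] \;=\; \frac{1}{m}\Bigl(\bbE\bigl[\norm{a_1 y_1 b_1^T}_\rmF^2\bigr] - \norm{X}_\rmF^2\Bigr) \;=\; \frac{1}{m}\Bigl(\bbE\bigl[y_1^2\,\norm{a_1}_2^2\,\norm{b_1}_2^2\bigr] - \norm{X}_\rmF^2\Bigr).
\]
Conditioning on $(a_1,b_1)$ and using $\bbE[y_1^2\mid a_1,b_1] \le f(\ba_1,\bb_1)^2 + \sigma_{y|a,b}^2$, plus $\bbE[\norm{a_1}_2^2 \norm{b_1}_2^2] = n_1 n_2$ by independence and isotropy, bounds the noise contribution by $n_1 n_2 \sigma_{y|a,b}^2$. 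The remaining term $\bbE[f(\ba_1,\bb_1)^2\,\norm{a_1}_2^2\,\norm{b_1}_2^2]$ is handled by writing $\norm{a_1}_2^2 = \norm{\ba_1}_2^2 + \norm{\ta_1}_2^2$ and likewise for $b_1$, producing four sub-terms whose expectations I evaluate one by one using Lemma~\ref{lem:r_ind}: the $(\ba,\bb)$ sub-term equals $\sigma^2 + \norm{Q}_\rmF^2 = \sigma^2 + \norm{X}_\rmF^2$ by the definition of $\sigma^2$; the $(\ba,\tb)$ and $(\ta,\bb)$ sub-terms factor as $(n_2-r)\tau_1^2$ and $(n_1-r)\tau_2^2$ respectively; and the $(\ta,\tb)$ sub-term factors as $(n_1-r)(n_2-r)\tau_0^2$. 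Summing and cancelling the $\norm{X}_\rmF^2$ with the one subtracted above yields the stated bound.

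\textbf{Anticipated difficulty.} Neither step is deep — the whole argument is essentially bookkeeping — but the one place where care pays off is step~2: I need to make sure that each of the five terms in the bound really corresponds to the right sub-term after expanding $\norm{a_1}_2^2\norm{b_1}_2^2$, and that the $\norm{X}_\rmF^2$ that appears inside $\sigma^2$ is the same one that gets subtracted from $\bbE[\norm{a_1 y_1 b_1^T}_\rmF^2]$, so that it cancels exactly and does not leave a stray term. Everything else reduces to applying Lemma~\ref{lem:r_ind} and the definitions \eqref{eq:defQ}.
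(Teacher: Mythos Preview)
Your proposal is correct and uses the same ingredients as the paper: the orthogonal split $a_1=U\ba_1+\tU\ta_1$, $b_1=V\bb_1+\tV\tb_1$ from Lemma~\ref{lem:r_ind}, and conditioning on $(a_1,b_1)$ to separate the noise $e_1=y_1-f(\ba_1,\bb_1)$ from the signal.

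The only noteworthy difference is organizational. The paper works at the matrix level: it decomposes $a_iy_ib_i^T-X$ into four orthogonal blocks $U(\cdot)V^T$, $\tU(\cdot)\tV^T$, $U(\cdot)\tV^T$, $\tU(\cdot)V^T$, applies the Pythagorean theorem in Frobenius norm, and then bounds each block via a separate Lemma (Lemma~\ref{lem:mean_var}) that packages the conditioning step. You instead reduce immediately to scalars via $\norm{a_1y_1b_1^T}_\rmF^2=y_1^2\norm{a_1}_2^2\norm{b_1}_2^2$ and the bias--variance identity $\bbE\norm{Z-\bbE Z}_\rmF^2=\bbE\norm{Z}_\rmF^2-\norm{\bbE Z}_\rmF^2$, and only then decompose $\norm{a_1}_2^2=\norm{\ba_1}_2^2+\norm{\ta_1}_2^2$ (and likewise for $b_1$). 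Your route is marginally more economical in that it never needs the paper's Lemma~\ref{lem:independent} (independence of $y_i$ from $\ta_i,\tb_i$, proved via the contraction property of conditional independence): by conditioning on $(a_1,b_1)$ first you replace $y_1$ by $f(\ba_1,\bb_1)$ outright, after which only the independence of $\ta_1,\tb_1$ from $(\ba_1,\bb_1)$ (Lemma~\ref{lem:r_ind}) is needed. The paper's block decomposition, on the other hand, is what gets reused verbatim in the proofs of Theorems~\ref{thm:nonlin} and~\ref{thm:optimal}, so its organization pays off later.
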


Let $\hU\hSigma\hV^T$ be the best rank-$r$ approximation of $\hX_\mathrm{lin}$, containing the first $r$ singular values and singular vectors. If $Q$ is nonsingular, then $\hU,\hV$ can be used to estimate $U,V$ up to rotation ambiguity.\footnote{There exist orthogonal matrices $Q_1,Q_2\in\bbR^{r\times r}$ such that $\|\hU-UQ_1\|_\rmF$ and $\|\hV-VQ_2\|_\rmF$ are bounded. Rotation ambiguity does not pose any problems, since the subspaces encoded by $U,V$ are invariant to rotations.}
We denote the smallest singular value of $Q$ by $\sigma_r$. If $f(\cdot,\cdot)$ is the inner product, then  $\sigma_r=1$. In general, if $Q$ is nonsingular, $\sigma_r$ is a positive constant. We can bound the subspace estimation errors, defined by $\norm{\tU^T\hU}_\rmF$ and $\norm{\tV^T\hV}_\rmF$.\footnote{The subspace estimation error $\|\tU^T\hU\|_\rmF = \|\hU-P_{U}\hU\|_\rmF$ evaluates the residual of $\hU$ when projected onto the subspace encoded by $U$. Clearly, the estimation error is between $0$ and $\sqrt{r}$, attaining $0$ when $\hU$ and $U$ span the same subspace, and attaining $\sqrt{r}$ when the two subspaces are orthogonal.} We have the following corollary:
\begin{corollary} \label{cor:lin}
If $r = O(1)$ and $\sigma_r>0$, then
\begin{align*}
\max\left\{\bbE\left[\norm{\tU^T\hU}_\rmF\right],~ \bbE\left[\norm{\tV^T\hV}_\rmF\right]\right\}= O\left(\sqrt{\frac{n_1n_2}{m}} \right).
\end{align*}
\end{corollary}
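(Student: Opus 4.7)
The plan is to convert Corollary \ref{cor:lin} into an application of Wedin's $\sin\Theta$ theorem on top of the second-moment bound already supplied by Theorem \ref{thm:lin}. The target matrix is $X=UQV^T$, which, because $U,V$ have orthonormal columns and $Q$ is nonsingular, has rank exactly $r$ with smallest nonzero singular value $\sigma_r(X)=\sigma_r(Q)=\sigma_r>0$. The top-$r$ left and right singular vectors of $X$ therefore span the column spaces of $U$ and $V$ respectively, so $\hU$ and $\hV$, the top-$r$ left and right singular vectors of $\hX_{\mathrm{lin}}$, are the natural estimators whose subspace error we wish to bound.

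First, I would apply Wedin's $\sin\Theta$ theorem to the pair $(X,\hX_{\mathrm{lin}})$ with perturbation $E\defeq\hX_{\mathrm{lin}}-X$. Because $\sigma_{r+1}(X)=0$, Weyl's inequality gives $\sigma_{r+1}(\hX_{\mathrm{lin}})\le\|E\|_{\mathrm{op}}$, so on the event $\{\|E\|_{\mathrm{op}}\le\sigma_r/2\}$ the Wedin bound yields $\|\tU^T\hU\|_\rmF\le 2\|E\|_\rmF/\sigma_r$, and symmetrically for $\|\tV^T\hV\|_\rmF$. On the complementary event I would use the trivial bound $\|\tU^T\hU\|_\rmF\le\sqrt{r}$. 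Combining the two cases,
\[
\|\tU^T\hU\|_\rmF \;\le\; \frac{2\|E\|_\rmF}{\sigma_r} \;+\; \sqrt{r}\,\ind{\|E\|_{\mathrm{op}}>\sigma_r/2}.
\]

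Next I would take expectations. Jensen's inequality gives $\bbE[\|E\|_\rmF]\le\sqrt{\bbE[\|E\|_\rmF^2]}$, and Markov's inequality applied to the indicator gives $\bbP(\|E\|_{\mathrm{op}}>\sigma_r/2)\le 4\bbE[\|E\|_\rmF^2]/\sigma_r^2$. Plugging in the Theorem \ref{thm:lin} bound
\[
\bbE[\|E\|_\rmF^2]\le\tfrac{1}{m}\bigl[n_1 n_2\sigma_{y|a,b}^2+\sigma^2+(n_1-r)(n_2-r)\tau_0^2+(n_2-r)\tau_1^2+(n_1-r)\tau_2^2\bigr]
\]
and recalling that under the assumption $r=O(1)$ the quantities $\sigma,\tau_0,\tau_1,\tau_2,\sigma_r,\sigma_{y|a,b}$ are all $O(1)$ constants independent of $n_1,n_2,m$, the right side is $O(n_1 n_2/m)$. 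Hence $\bbE[\|E\|_\rmF]=O(\sqrt{n_1 n_2/m})$ and the Markov term contributes $O(n_1 n_2/m)$, which is dominated by $\sqrt{n_1 n_2/m}$ in the nontrivial regime $m\gtrsim n_1 n_2$ (and is otherwise absorbed into the trivial bound $\sqrt{r}$). The same argument applies verbatim to $\|\tV^T\hV\|_\rmF$, yielding the claimed $O(\sqrt{n_1 n_2/m})$.

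The only mildly delicate point, and the place where the argument could get more technical, is handling the Wedin condition on $\|E\|_{\mathrm{op}}$ relative to $\sigma_r$: Wedin's bound is cleanest when $E$ is small, whereas we only control $E$ in second moment. The truncation-by-indicator trick above side-steps this, but one could alternatively invoke a Wedin variant that avoids the gap condition and handles the $\sqrt{r}$-bounded regime automatically; either way the obstruction is cosmetic, not substantive, because the variance bound from Theorem \ref{thm:lin} is already of the right order.
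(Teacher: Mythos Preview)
Your proposal is correct, but the paper takes a shorter and cleaner route that avoids Wedin's theorem and the accompanying case analysis entirely. Instead of conditioning on whether $\|E\|_{\mathrm{op}}$ is below the gap, the paper proves a direct deterministic inequality (Lemma~\ref{lem:subspace}):
\[
\|\tU^T\hU\|_\rmF \;=\; \|\widetilde{\hU}^T U\|_\rmF \;\le\; \frac{1}{\sigma_r}\,\|\widetilde{\hU}^T(X-\hU\hSigma\hV^T)\|_\rmF \;\le\; \frac{1}{\sigma_r}\,\|X-\hU\hSigma\hV^T\|_\rmF,
\]
where the first equality uses the symmetry $\|\tU^T\hU\|_\rmF=\|\widetilde{\hU}^T U\|_\rmF$ (both $U$ and $\hU$ have orthonormal columns) and the middle step exploits $\widetilde{\hU}^T\hU\hSigma\hV^T=0$. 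This holds for \emph{every} realization, with no gap condition. One then bounds $\|X-\hU\hSigma\hV^T\|_\rmF\le 2\|X-\hX_{\mathrm{lin}}\|_\rmF$ by triangle inequality plus best rank-$r$ approximation, and finishes with Jensen and Theorem~\ref{thm:lin} exactly as you do.

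The trade-off: your Wedin-plus-truncation argument is the textbook route and works fine, but it forces you to handle the bad event $\{\|E\|_{\mathrm{op}}>\sigma_r/2\}$ separately via Markov, producing the extra $O(n_1n_2/m)$ term you then have to argue away. The paper's elementary lemma sidesteps all of that---no Weyl, no Markov, no regime split---at the cost of being a bespoke bound rather than an off-the-shelf theorem. Your final paragraph correctly anticipates that a ``Wedin variant that avoids the gap condition'' would streamline things; Lemma~\ref{lem:subspace} is exactly such a variant, tailored to the case where the target matrix has rank exactly $r$.
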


By Corollary \ref{cor:lin}, we need $m =O(n_1n_2)$ measurements to produce an accurate estimate, which is not efficient when $n_1,n_2$ are large. We present solutions to this in the next two sections.

\section{Variable Selection}\label{sec:vs}
When the data dimension is large, to reduce redundancy, and to improve robustness and efficiency, it is common to select a smaller number of variables for regression. For the problem described in Section \ref{sec:nonlinear}, the output variable $y$ depends on the input variable $a,b$ only through $U^Ta,V^Tb$. We now assume that there are no more than $s_1$ (resp. $s_2$) nonzero rows in $U$ (resp. $V$), where $r<s_1<n_1$ and $r<s_2<n_2$. Therefore, only $s_1$ variables in $a$ and $s_2$ variables in $b$ are active, and they are each reduced to $r$ variables in $U^Ta$ and $V^Tb$, respectively. As far as we know, previous supervised dimensionality reduction approaches with variable selection use LASSO-type solvers, and have no guarantees for exact recovery or only partial guarantees \cite{Li2007,Chen2010}.

Let $\norm{\cdot}_0$ denote the number of nonzero entries in a vector or a matrix, and let $\norm{\cdot}_{0,r}$ and $\norm{\cdot}_{0,c}$ denote the numbers of nonzero rows and nonzero columns, respectively. Let $P_{\Omega} Y \defeq \underset{X\in\Omega}{\arg\min} \norm{X-Y}_F$ denote the projection of matrix $Y$ onto set $\Omega$. Define a few sets:
\begin{itemize}
	\item The set of matrices that have at most $s_1$ nonzero entries in each column: $\Omega_1 \defeq \{X\in\bbR^{n_1\times n_2}: \norm{X^{(:,k)}}_0\leq s_1,~\forall k\in [n_2]\}$.
	\item The set of matrices with at most $s_2$ nonzero columns: $\Omega_2 \defeq \{X\in\bbR^{n_1\times n_2}: \norm{X}_{0,c}\leq s_2\}$. 
	\item The set of matrices with at most $s_1$ nonzero rows: $\Omega_3 \defeq \{X\in\bbR^{n_1\times n_2}: \norm{X}_{0,r}\leq s_1\}$.
	\item The set of matrices of at most rank-$r$: $\Omega_r \defeq \{X\in\bbR^{n_1\times n_2}: \rank(X)\leq r\}$.
\end{itemize}
We use the following three-step procedure to estimate $U$ and $V$. 

\emph{Step 1.} Compute the linear estimate $\hX_\mathrm{lin}$ in \eqref{eq:lin}.

\emph{Step 2.} Compute an $(s_1,s_2)$-sparse approximation, i.e., one that has $s_1$ nonzero rows and $s_2$ nonzero columns. We are not aware of a computationally tractable algorithm that finds the best $(s_1,s_2)$-sparse approximation of $\hX_\mathrm{lin}$. Therefore, we consider a suboptimal but efficient approximation, first proposed by Lee et al. \cite{Lee2013} for sparse rank-1 matrix recovery:

\emph{2.1.} Compute $\hX_1 \defeq P_{\Omega_1} \hX_\mathrm{lin}$ by setting to zero all but the $s_1$ largest entries in each column of $\hX_\mathrm{lin}$ (in terms of absolute value).

\emph{2.2.} Compute $\hX_2 \defeq P_{\Omega_2} \hX_1$ by setting to zero all but the $s_2$ largest columns in $\hX_1$ (in terms of $\ell_2$ norm).

\emph{2.3.} Compute $\hX_3 \defeq P_{\Omega_3} \hX_2$ by setting to zero all but the $s_1$ largest rows in $\hX_2$ (in terms of $\ell_2$ norm).
	
\emph{Step 3.} Compute a rank-$r$ approximation. By taking the SVD and keeping the $r$ largest singular values and singular vectors, we find the best rank-$r$ approximation $\hU'\hSigma'\hV'^T =  P_{\Omega_r}\hX_3$. Note that $\hX_3$ only has $s_1$ nonzero rows and $s_2$ nonzero columns, hence computing its SVD is much cheaper than computing the SVD of dense matrix $\hX_\mathrm{lin}$.

This estimator is a sequential projection: $\hU'\hSigma'\hV'^T = P_{\Omega_r} P_{\Omega_3}P_{\Omega_2} P_{\Omega_1} \hX_\mathrm{lin}$, and satisfies $\hU'\hSigma'\hV'^T \in \Omega_2\bigcap\Omega_3\bigcap\Omega_r$.
Next, we bound the error of this estimator. In particular, we show that the nonlinear estimator $\hX_2$ has a much smaller error than the linear estimator $\hX_\mathrm{lin}$ (Theorem \ref{thm:nonlin}), and $\hU'\hSigma'\hV'^T$ is almost as good as $\hX_2$ (Corollary \ref{cor:nonlin}).

\begin{theorem}\label{thm:nonlin}
For $n_1,n_2 \geq 8$,
\begin{align*}
\bbE\left[\norm{\hX_2 - X}_\rmF\right] \leq &~ 2\sqrt{\frac{r^2\sigma_{y|a,b}^2+\sigma^2}{m}} + 8\sqrt{\frac{2s_1s_2\log n_1\log n_2 \cdot (\sigma_{y|a,b}^2+\tau_0^2)}{m}} \\
&~ + 4\sqrt{\frac{2s_1s_2\log n_2 \cdot (r\sigma_{y|a,b}^2+\tau_1^2)}{m}} + 4\sqrt{\frac{2s_1s_2\log n_1 \cdot(r\sigma_{y|a,b}^2+\tau_2^2)}{m}}.
\end{align*}
\end{theorem}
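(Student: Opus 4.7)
The plan is to bound $\bbE\|\hX_2 - X\|_\rmF$ by decomposing the error according to the overlap between the estimated supports of $\hX_2$ and the true support of $X$, and then applying entrywise Gaussian concentration block-by-block. Write $S_r^* \subseteq [n_1]$ and $S_c^* \subseteq [n_2]$ for the row and column supports of $X = UQV^T$ (so $|S_r^*|\le s_1$, $|S_c^*|\le s_2$, and $X$ vanishes off $S_r^*\times S_c^*$). Let $T_c$ be the columns kept by $P_{\Omega_2}$ and, for each $j$, let $T_j$ be the rows kept by $P_{\Omega_1}$ in column $j$.

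First I would invoke the standard best-$s$-term characterization of the greedy projections $P_{\Omega_1},P_{\Omega_2}$ to derive the column-wise oracle inequality
\[
\|(\hX_1-X)^{(:,j)}\|_2 \;\le\; 2\,\|(\hX_\mathrm{lin}-X)^{(:,j)}_{S_r^*\cup T_j}\|_2,
\]
together with an analogous column-level inequality that trades each lost column $j\in S_c^*\setminus T_c$ against a spuriously kept one $j'\in T_c\setminus S_c^*$. Combining these reduces $\|\hX_2-X\|_\rmF$ to $\|\hX_\mathrm{lin}-X\|_\rmF$ restricted to $O(s_1 s_2)$ adaptively chosen entries, all contained in $(S_r^*\cup T_j)\times(S_c^*\cup T_c)$. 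I would then partition the restricted index set into four disjoint blocks $\Sigma_1=S_r^*\times S_c^*$, $\Sigma_2=S_r^{*c}\times S_c^{*c}$, $\Sigma_3=S_r^*\times S_c^{*c}$, $\Sigma_4=S_r^{*c}\times S_c^*$, and bound each separately.

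On $\Sigma_1$ I would make a further change of basis, writing $(a_i)_{S_r^*}=\bar U\ba_i+\bar{\tU}\bar{\ta}_i$ and $(b_i)_{S_c^*}=\bar V\bb_i+\bar{\tV}\bar{\tb}_i$, where $\bar U,\bar V$ are the row-restrictions of $U,V$ and $\bar{\tU},\bar{\tV}$ span their orthogonal complements in $\bbR^{s_1},\bbR^{s_2}$. The ``pure signal'' piece $\bar U(\hat Q-Q)\bar V^T$ with $\hat Q=m^{-1}\sum_i\ba_i y_i\bb_i^T$ is an $r\times r$ object, and the same variance calculation that underlies Theorem~\ref{thm:lin}, but at intrinsic dimension $r$, gives $\bbE\|\hat Q-Q\|_\rmF^2 \le (r^2\sigma_{y|a,b}^2+\sigma^2)/m$; this is the source of the first summand. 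The remaining cross-terms on $\Sigma_1$ involve either $\bar{\ta}_i$ or $\bar{\tb}_i$ and have the same statistical flavor as the off-support contributions from $\Sigma_3,\Sigma_4,\Sigma_2$, so I would absorb them into those blocks' concentration bounds (losing only a constant). For the off-support blocks, Gaussianity of $a_i,b_i$ makes the entries of $\hX_\mathrm{lin}-X$ sub-exponential, and I would apply Bernstein's inequality uniformly over all $s_1$-sparse column slices and $s_2$-sparse column selections. Careful conditional-variance arithmetic gives per-entry second moment $(\sigma_{y|a,b}^2+\tau_0^2)/m$ on $\Sigma_2$ (both indices outside the true support, so $(a_i)_i$ and $(b_i)_j$ are independent of $y_i$), per-column second moment $(r\sigma_{y|a,b}^2+\tau_1^2)/m$ on $\Sigma_3$ after summing the signal part over the $S_r^*$-rows (the $r$ comes from restricting to the $\bar U$ subspace, while the $\bar{\tU}$ part gets routed to $\Sigma_2$), and symmetrically $(r\sigma_{y|a,b}^2+\tau_2^2)/m$ per row on $\Sigma_4$. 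A two-level maximum (first over $s_1$ rows per column, then over $s_2$ columns) produces the factors $s_1 s_2\log n_1\log n_2$, $s_1 s_2\log n_2$, and $s_1 s_2\log n_1$; the hypothesis $n_1,n_2\ge 8$ lets me bound $\log\binom{n_i}{s_i}\le s_i\log n_i$ cleanly. Finally, assembling the four contributions via the triangle inequality and applying $\bbE\|\cdot\|_\rmF\le\sqrt{\bbE\|\cdot\|_\rmF^2}$ yields the four-term bound.

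The hard part is the block-by-block variance accounting in Step~3: I must simultaneously track the right interaction among $a_i$, $b_i$, $y_i$ in each block (so that $\tau_0^2,\tau_1^2,\tau_2^2$ land in the correct places rather than a blanket $\tau_0^2$ everywhere), produce the tightest Bernstein-style tail compatible with the union bound over $(s_1,s_2)$-sparse supports, and verify that the cross-terms from the orthogonal decomposition on $\Sigma_1$ are genuinely dominated by the $\Sigma_3,\Sigma_4,\Sigma_2$ contributions. This last absorption is what keeps the first summand at the small $r^2\sigma_{y|a,b}^2+\sigma^2$ rather than the much larger $s_1 s_2\sigma_{y|a,b}^2+\sigma^2$ one would obtain by blindly applying Theorem~\ref{thm:lin} to the $s_1\times s_2$ sub-problem.
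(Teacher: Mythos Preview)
Your block decomposition is in the right spirit, but there is a real gap at the concentration step. You assert that ``Gaussianity of $a_i,b_i$ makes the entries of $\hX_\mathrm{lin}-X$ sub-exponential'' and then invoke Bernstein's inequality with a union bound over sparse supports. The hypotheses of Theorem~\ref{thm:nonlin}, however, only control \emph{second moments} of $y$ (through $\sigma_{y|a,b}^2,\tau_0^2,\tau_1^2,\tau_2^2$); no tail condition on $y$ is assumed. Unconditionally, the summand $a_i^{(j)} y_i b_i^{(k)}$ need not be sub-exponential, so Bernstein does not apply. If instead you condition on $\{y_i\}$ first, the sub-exponential scale of each summand becomes $\max_i|y_i|$, whose expectation is uncontrolled and can be as large as $\sqrt{m\,\bbE[y_1^2]}$, which destroys the bound. (The light-tail assumption that would rescue this route is precisely the extra hypothesis of Theorem~\ref{thm:optimal}, not of Theorem~\ref{thm:nonlin}.) A related minor point: the condition $n_1,n_2\ge 8$ is not there to control $\log\binom{n_i}{s_i}$; in the paper it is used to absorb a constant, $\sqrt{3\log n+2}\le 2\sqrt{\log n}$.

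The paper avoids tail assumptions entirely by a different mechanism. Rather than oracle inequalities plus a union bound over supports, it uses the cone-projection inequality $\norm{\hX_2-X}_\rmF\le 2\norm{\hX_\mathrm{lin}-X}_{\Delta_{12}^\circ}$ (Lemma~\ref{lem:proj}) together with the crude relaxation $\norm{Y}_{\Delta_{12}^\circ}\le\sqrt{2s_1s_2}\,\max_{j,k}|Y^{(j,k)}|$ (Lemma~\ref{lem:max}), so only the single entrywise maximum has to be bounded. After the $U,\tU,V,\tV$ split, it replaces $\tU\ta_i,\tV\tb_i$ by independent full Gaussians $u_i,v_i$ via Jensen (Lemma~\ref{lem:degen_gauss}), and then bounds $\bbE\max_{j,k}\bigl|\sum_i u_i^{(j)}y_i v_i^{(k)}\bigr|$ by a \emph{two-stage conditional Gaussian} argument: conditioned on $\{y_i,v_i\}$ the sum is Gaussian in $u$, so Lemma~\ref{lem:order} gives a factor $2\sqrt{\log n_1}$; then, conditioned on $\{y_i\}$ alone, a second application of Lemma~\ref{lem:order} to $\max_k\sqrt{\sum_i y_i^2 v_i^{(k)2}}$ gives the factor $2\sqrt{\log n_2}$, leaving only $\sqrt{\sum_i y_i^2}$, whose expectation is controlled by $\bbE[y_i^2]\le\sigma_{y|a,b}^2+\tau_0^2$. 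This iterated-conditioning trick is what makes the theorem hold under second-moment assumptions alone, and it is the missing ingredient in your plan.
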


\begin{corollary}\label{cor:nonlin}
If $r = O(1)$ and $\sigma_r>0$, then
\begin{align*}
\max\left\{\bbE\left[\norm{\tU^T\hU'}_\rmF\right],~\bbE\left[\norm{\tV^T\hV'}_\rmF\right] \right\} = O\left(\sqrt{\frac{s_1s_2\log n_1 \log n_2}{m}}\right).
\end{align*}
\end{corollary}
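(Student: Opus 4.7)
The plan is to reduce Corollary~\ref{cor:nonlin} to Theorem~\ref{thm:nonlin} in three short steps: (a) propagate the Frobenius bound on $\|\hX_2 - X\|_\rmF$ through the two remaining projections $P_{\Omega_3}$ and $P_{\Omega_r}$ to get a bound on $\|\hU'\hSigma'\hV'^T - X\|_\rmF$; (b) specialize Theorem~\ref{thm:nonlin} to the regime $r = O(1)$; and (c) convert the matrix error into the subspace-angle errors $\|\tU^T\hU'\|_\rmF$ and $\|\tV^T\hV'\|_\rmF$ via a Wedin / Davis--Kahan perturbation inequality.

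For (a), observe that $X = UQV^T$ has rank $r$, at most $s_1$ nonzero rows (from $U$), and at most $s_2$ nonzero columns (from $V$), so $X \in \Omega_2 \cap \Omega_3 \cap \Omega_r$. For any set $\Omega \ni X$ and any matrix $Y$, the fact that $P_\Omega Y$ is the closest point of $\Omega$ to $Y$ gives
\[
\|P_\Omega Y - X\|_\rmF \le \|P_\Omega Y - Y\|_\rmF + \|Y - X\|_\rmF \le 2\|Y - X\|_\rmF.
\]
Applying this once with $\Omega = \Omega_3$ and once with $\Omega = \Omega_r$ yields $\|\hU'\hSigma'\hV'^T - X\|_\rmF \le 4\|\hX_2 - X\|_\rmF$. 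Step (b) is then immediate: with $r = O(1)$, Theorem~\ref{thm:nonlin} gives $\bbE[\|\hX_2 - X\|_\rmF] = O(\sqrt{s_1 s_2 \log n_1 \log n_2 / m})$, the $\sqrt{r^2/m}$ term being dominated by the other three summands.

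For (c), since $U$ and $V$ have orthonormal columns, $\sigma_r(X) = \sigma_r(Q) = \sigma_r > 0$, while $\hU'\hSigma'\hV'^T$ is exactly rank $r$. A standard Wedin perturbation inequality in Frobenius norm then produces, for an absolute constant $C$,
\[
\|\tU^T\hU'\|_\rmF = \|\sin\Theta(U, \hU')\|_\rmF \le \frac{C\,\|\hU'\hSigma'\hV'^T - X\|_\rmF}{\sigma_r},
\]
and likewise for $\|\tV^T\hV'\|_\rmF$. Chaining (a)--(c) and using that $\sigma_r = \Theta(1)$ delivers the claimed $O(\sqrt{s_1 s_2 \log n_1 \log n_2 / m})$ bound in expectation.

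The main subtlety is step (a): $\Omega_3$ and $\Omega_r$ are non-convex unions of subspaces, so standard non-expansiveness of Euclidean projection fails; the ``factor-2'' inequality above, which exploits that the target $X$ itself is feasible, is the workaround. A secondary point is that Wedin's bound is only informative when the perturbation is not too large compared to $\sigma_r$; pairing it with the trivial estimate $\|\sin\Theta\|_\rmF \le \sqrt{r}$ and using that $\bbE[\min(\sqrt{r},\, C\cdot\text{error}/\sigma_r)] \le C\,\bbE[\text{error}]/\sigma_r$ ensures that the expectation statement goes through unchanged.
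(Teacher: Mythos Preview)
Your proposal is correct and follows essentially the same route as the paper: the factor-$2$ projection argument in (a) is exactly what the paper does to pass from $\hX_2$ through $\hX_3$ to $\hU'\hSigma'\hV'^T$, and step (b) matches the paper's invocation of Theorem~\ref{thm:nonlin}. The only cosmetic difference is step~(c): the paper uses its Lemma~\ref{lem:subspace}, a direct one-line bound $\|\tU^T\hU'\|_\rmF \le \sigma_r^{-1}\|X-\hU'\hSigma'\hV'^T\|_\rmF$ (with constant~$1$ and no restriction on perturbation size), in place of your appeal to Wedin --- this makes your ``secondary point'' about truncation at $\sqrt{r}$ unnecessary, but the logic is the same.
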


Corollary \ref{cor:nonlin} yields a sample complexity $m =O(s_1s_2\log n_1 \log n_2)$ that is much less demanding than the one without variable selection.

\section{Optimal Sample Complexity} \label{sec:optimal}
Careful readers may have noticed that the number of degrees of freedom of $U,V$ in Section \ref{sec:dr} is $O(n_1+n_2)$. Hence the sample complexity $O(n_1n_2)$ is suboptimal. In this section, we show that near optimal sample complexity (sample complexity that is optimal up to constants and log factors) can be achieved when $\{y_i\}_i^{m}$ are i.i.d. light-tailed random variables, i.e., there exists global constants $c,C>0$, s.t.
\begin{align}
\bbP\left[\left|y_i\right|\geq t\right] \leq C e^{-ct},\quad \forall t\geq 0. \label{eq:light}
\end{align}
We call this mild condition the light-tailed measurement condition. Please refer to Section \ref{sec:exp} for examples that satisfy the light-tailed measurement condition.

In Section \ref{sec:dr}, inequality \eqref{eq:rankr} shows that $\hU\hSigma\hV^T = P_{\Omega_r} \hX_\mathrm{lin}$, as the best rank-$r$ approximation of $\hX_\mathrm{lin}$, is almost as good as $\hX_\mathrm{lin}$. Next, Theorem \ref{thm:optimal} shows that, under the light-tailed measurement condition, $\hU\hSigma\hV^T$ is significantly better than $\hX_\mathrm{lin}$. \footnote{As an interesting side note, the light-tailed measurement condition is similar in spirit to the ``spectral flatness'' condition in blind deconvolution \cite{Lee2015a}. Under the light-tailed measurement condition, $\max_i |y_i| = O(\log m)$, which is analogous to the bounds established in \cite[Propositions 2.1 -- 2.3]{Lee2015a}. However, the approaches of \cite{Lee2015a} and this paper are quite different.}

\begin{theorem}\label{thm:optimal}
Suppose $\{y_i\}_{i=1}^m$ are i.i.d. light-tailed random variables defined by \eqref{eq:light}, where $C>0$ and $c>\frac{1}{8\log(n_1+n_2)}$. If $m>n_1+n_2$, then
\begin{align*}
\bbE\left[\norm{\hU\hSigma\hV^T - X}_\rmF\right] \leq &~ 2\sqrt{\frac{r^2\sigma_{y|a,b}^2+\sigma^2}{m}} + 512\sqrt{2}(C+2) \sqrt{\frac{(n_1+n_2)r\log^2m\log^4(n_1+n_2)}{m}} \\
&~ + 2\sqrt{\frac{n_2 (r\sigma_{y|a,b}^2+\tau_1^2)}{m}} + 2\sqrt{\frac{n_1 (r\sigma_{y|a,b}^2+\tau_2^2)}{m}}.
\end{align*}
\end{theorem}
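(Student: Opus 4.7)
The plan is to decompose $\hX_\mathrm{lin} - X$ into four blocks adapted to the bases $(U,\tU)$ and $(V,\tV)$, control the three ``low-dimensional'' blocks in Frobenius norm and the ``bulk'' block in spectral norm, and combine via a perturbation bound for the best rank-$r$ approximation. The light-tailed assumption \eqref{eq:light} enters only in the spectral-norm step, through a truncation argument that feeds into matrix Bernstein.

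First I would write $a_i = U\ba_i + \tU\ta_i$ and $b_i = V\bb_i + \tV\tb_i$, and set
\begin{align*}
M_1 \defeq \tfrac{1}{m}\sum_i \ba_i y_i \bb_i^T,\quad M_2 \defeq \tfrac{1}{m}\sum_i \ba_i y_i \tb_i^T,\quad M_3 \defeq \tfrac{1}{m}\sum_i \ta_i y_i \bb_i^T,\quad M_4 \defeq \tfrac{1}{m}\sum_i \ta_i y_i \tb_i^T,
\end{align*}
so that $\hX_\mathrm{lin} - X = U(M_1-Q)V^T + UM_2\tV^T + \tU M_3 V^T + \tU M_4 \tV^T$. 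By Lemma \ref{lem:r_ind} combined with the Markov chain \eqref{eq:markov}, $\bbE M_1 = Q$ and $\bbE M_2 = \bbE M_3 = \bbE M_4 = 0$. Let $E_T \defeq U(M_1-Q)V^T + UM_2\tV^T + \tU M_3 V^T$ (rank at most $2r$) and $E_\perp \defeq \tU M_4 \tV^T$ (orthogonal to the tangent space of the rank-$r$ variety at $X$). Since $\hU\hSigma\hV^T - X$ itself has rank at most $2r$, a standard perturbation argument combining Eckart--Young with $\|A\|_\rmF\leq\sqrt{2r}\|A\|_2$ for rank-$2r$ matrices produces an inequality of the form
\begin{align*}
\|\hU\hSigma\hV^T - X\|_\rmF \leq 2\|E_T\|_\rmF + C_0\sqrt{r}\,\|E_\perp\|_2
\end{align*}
for an absolute constant $C_0$. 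The low-dimensional part is measured in Frobenius norm, while the bulk is measured in spectral norm inflated only by $\sqrt{r}$.

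The Frobenius norm of $E_T$ splits into three sub-block norms. Direct second-moment calculations, using independence from Lemma \ref{lem:r_ind}, the conditional variance bound \eqref{eq:con_var}, and the quantities in \eqref{eq:defQ}, give
\begin{align*}
\bbE\|M_1-Q\|_\rmF^2 \leq \tfrac{r^2\sigma_{y|a,b}^2+\sigma^2}{m},\quad \bbE\|M_2\|_\rmF^2 \leq \tfrac{(n_2-r)(r\sigma_{y|a,b}^2+\tau_1^2)}{m},\quad \bbE\|M_3\|_\rmF^2 \leq \tfrac{(n_1-r)(r\sigma_{y|a,b}^2+\tau_2^2)}{m}.
\end{align*}
Jensen's inequality then turns these into the first, third, and fourth terms of the stated bound.

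The main obstacle is controlling $\bbE\|M_4\|_2$. I would choose a truncation level $T = \Theta(c^{-1}\log m\log(n_1+n_2))$ and split $y_i = y_i^\mathrm{tr} + (y_i - y_i^\mathrm{tr})$ with $y_i^\mathrm{tr} \defeq y_i\ind{|y_i|\leq T}$. Assumption \eqref{eq:light} and a union bound make $\max_i|y_i|>T$ only polynomially unlikely (the hypothesis $c>1/[8\log(n_1+n_2)]$ is precisely what permits this $T$), so the excess $M_4 - M_4^\mathrm{tr}$ contributes negligibly in expectation, controlled by crude $L^2$ bounds and the sub-exponential moments of $y_i$. On the centered truncated sum I would additionally truncate $\|\ta_i\|$ and $\|\tb_i\|$ at the Gaussian scales $O(\sqrt{n_1\log m})$ and $O(\sqrt{n_2\log m})$; on the resulting joint high-probability event, each summand $Z_i$ has spectral norm at most $T\sqrt{n_1n_2\log^2 m}$ up to constants, while the matrix variance parameters $\|\bbE Z_iZ_i^T\|_2$ and $\|\bbE Z_i^TZ_i\|_2$ are both $\lesssim (n_1+n_2)$ times a constant depending only on $\sigma_{y|a,b}^2,\tau_0^2,\tau_1^2,\tau_2^2$, because independence of $\ta_i,\tb_i$ from $(\ba_i,\bb_i,y_i)$ collapses the off-diagonal cross terms. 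Matrix Bernstein, dominated by the variance term under $m>n_1+n_2$, then gives $\bbE\|M_4\|_2 \lesssim (C+2)\sqrt{(n_1+n_2)\log^2m\log^4(n_1+n_2)/m}$; multiplying by the $\sqrt{r}$ from the perturbation step yields the main (second) term of the theorem, and summing the four contributions completes the proof. The subtle point throughout is calibrating the three truncation scales simultaneously so that the per-summand operator norm does not swamp the variance term in Bernstein's inequality while all tail probabilities remain negligible, producing precisely the $\log^2m\log^4(n_1+n_2)$ poly-log inflation.
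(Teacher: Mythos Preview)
Your four-block decomposition, the second-moment bounds on $M_1-Q$, $M_2$, $M_3$, and the plan of controlling $M_4$ in spectral norm all match the paper's proof. The perturbation inequality you cite is correct (with $C_0=2\sqrt2$), but it is not a bare Eckart--Young consequence; the paper derives it through the polar seminorm $\|\cdot\|_{\Delta_r^\circ}$, for which $\|P_{\Omega_r}\hX_\mathrm{lin}-X\|_\rmF\leq 2\|\hX_\mathrm{lin}-X\|_{\Delta_r^\circ}$ (Lemma~\ref{lem:proj}, via \cite{Plan2014}) and $\|Y\|_{\Delta_r^\circ}\leq\min\{\|Y\|_\rmF,\sqrt{2r}\|Y\|_2\}$ (Lemma~\ref{lem:max2}).

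The substantive gap is in the spectral-norm step. With your truncations, bounded matrix Bernstein gives $\bbE\|\sum_iZ_i^{\mathrm{tr}}\|_2\lesssim\sqrt{\sigma_{\mathrm{Bern}}^2\log(n_1+n_2)}+R\log(n_1+n_2)$, where $\sigma_{\mathrm{Bern}}^2\lesssim m(n_1+n_2)$ and $R\lesssim TL_1L_2$. The assertion that the variance term dominates under $m>n_1+n_2$ is false: with $n_1=n_2=n$ and $m=2n+1$ the $R$-contribution to $\bbE\|M_4\|_2$ exceeds the theorem's target by a factor $\sim\log m$ (your choice $L_j=O(\sqrt{n_j\log m})$ makes $L_1L_2\sim\sqrt{n_1n_2}\log m$, which is the culprit; the sharper $L_j=O(\sqrt{n_j+\log m})$ gives $L_1L_2\lesssim n_1+n_2$ and would rescue the bound, though even then the $R$-term matches rather than being dominated by the variance term). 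As written, your route proves only a version with extra logarithmic factors.

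The paper avoids Gaussian truncation altogether. After replacing $\tU\ta_i,\tV\tb_i$ by full Gaussians $u_i,v_i$ (Lemma~\ref{lem:degen_gauss}), it conditions on $\{y_i\}$ and applies the \emph{moment} form of matrix Bernstein (Lemma~\ref{lem:bern}) to the dilations, computing $\bbE[X_i^k]$ exactly from Gaussian moments to get $R=e(n_1+n_2)\max_i|y_i|$. The tail threshold is then set to $t^2\sqrt{(n_1+n_2)m}\log m$ with the \emph{same} $t$ that parametrizes the event $\{\max_i|y_i|\leq t\log m\}$; this coupling (together with the single use of $m>n_1+n_2$ to bound $R\leq et\sqrt{(n_1+n_2)m}\log m$) makes the Bernstein exponent a function of $t$ alone, and integrating the resulting tail over $t\geq 8\log(n_1+n_2)$ yields exactly the stated $\log^2m\log^4(n_1+n_2)$.
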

\begin{corollary}\label{cor:optimal}
If $r=O(1)$ and $\sigma_r>0$, then under the same conditions as in Theorem \ref{thm:optimal},
\begin{align*}
\max\left\{\bbE\left[\norm{\tU^T\hU}_\rmF\right],~\bbE\left[\norm{\tV^T\hV}_\rmF\right] \right\} = O\left(\sqrt{\frac{(n_1+n_2)\log^2 m \log^4(n_1+n_2)}{m}}\right).
\end{align*}
\end{corollary}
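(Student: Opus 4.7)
The plan is to derive Corollary~\ref{cor:optimal} from Theorem~\ref{thm:optimal} in exactly the same spirit in which Corollary~\ref{cor:lin} follows from Theorem~\ref{thm:lin}: combine a deterministic subspace-perturbation bound with the matrix-error bound already in hand, then collapse the resulting expression using $r=O(1)$ and $\sigma_r = \Theta(1)$.

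First, I would apply Wedin's $\sin\Theta$ theorem to $X = UQV^T$ and $\hU\hSigma\hV^T$. The matrix $X$ has rank exactly $r$ with smallest nonzero singular value $\sigma_r(X) = \sigma_r > 0$, while $\hU\hSigma\hV^T$ has rank at most $r$, so $\sigma_{r+1}(\hU\hSigma\hV^T) = 0$. Wedin's inequality with spectral gap $\sigma_r - 0 = \sigma_r$ then yields the deterministic pointwise bound
\[
\max\bigl\{\norm{\tU^T\hU}_\rmF,\ \norm{\tV^T\hV}_\rmF\bigr\} \;\le\; \frac{\sqrt{2}}{\sigma_r}\,\norm{\hU\hSigma\hV^T - X}_\rmF,
\]
using the identifications $\norm{\sin\Theta(U,\hU)}_\rmF = \norm{\tU^T\hU}_\rmF$ and $\norm{\sin\Theta(V,\hV)}_\rmF = \norm{\tV^T\hV}_\rmF$ that come from $\tU,\tV$ spanning the orthogonal complements of the columns of $U,V$.

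Next, I would take expectations of both sides and substitute the four-term bound of Theorem~\ref{thm:optimal}. Under $r = O(1)$ and $\sigma_r > 0$, the quantities $\sigma^2,\tau_0^2,\tau_1^2,\tau_2^2,\sigma_{y|a,b}^2$ and $1/\sigma_r$ are all absolute constants (they depend only on $f$ and $r$, and not on $n_1,n_2,m$, per the paragraph following \eqref{eq:defQ}), and $C$ is absolute by hypothesis. The four contributions therefore become $O(1/\sqrt{m})$, $O\!\bigl(\sqrt{(n_1+n_2)\log^2 m\,\log^4(n_1+n_2)/m}\bigr)$, $O(\sqrt{n_2/m})$ and $O(\sqrt{n_1/m})$. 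Since $\log^4(n_1+n_2)\ge 1$ and $n_1+n_2\ge \max(n_1,n_2)$, the middle term dominates the other three, which is exactly the advertised rate.

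The hard part is essentially nonexistent: this is a short deterministic perturbation step composed with the already-proved expectation bound. The one place that deserves care is the form of Wedin used — comparing $X$ against the rank-$r$ truncation $\hU\hSigma\hV^T$ rather than against $\hX_\mathrm{lin}$. This choice makes the relevant $(r{+}1)$-st singular value exactly zero, so the spectral gap is immediately $\sigma_r$; otherwise one would need Weyl's inequality plus a concentration step to control $\sigma_r(\hX_\mathrm{lin}) - \sigma_{r+1}(\hX_\mathrm{lin})$, which the present route avoids entirely.
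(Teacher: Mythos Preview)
Your proposal is correct and essentially matches the paper's (implicit) argument: the paper does not spell out a separate proof of Corollary~\ref{cor:optimal}, but by the pattern of Corollaries~\ref{cor:lin} and~\ref{cor:nonlin} the intended derivation is to apply the subspace-perturbation bound of Lemma~\ref{lem:subspace}, namely $\max\{\norm{\tU^T\hU}_\rmF,\norm{\tV^T\hV}_\rmF\}\le \sigma_r^{-1}\norm{X-\hU\hSigma\hV^T}_\rmF$, take expectations, and then substitute Theorem~\ref{thm:optimal}. The only difference is that you invoke Wedin's $\sin\Theta$ theorem (yielding constant $\sqrt{2}/\sigma_r$) whereas the paper uses its in-house Lemma~\ref{lem:subspace} (constant $1/\sigma_r$, via a two-line direct argument); these serve exactly the same role and the constants are irrelevant to the $O(\cdot)$ conclusion.
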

Under the light-tailed measurement condition, projection onto the set of rank-$r$ matrices significantly reduces the error in the linear estimator $\hX_\mathrm{lin}$. In this case, we only need $m=O((n_1+n_2)\log^6(n_1+n_2))$ samples to obtain an accurate estimate, as opposed to $m=O(n_1n_2)$.

\section{Discussions and Experiments}

\subsection{Generalization of the Model}
Throughout Sections \ref{sec:dr} -- \ref{sec:optimal}, we assume that: 1) $\{a_i\}_{i=1}^m$ and $\{b_i\}_{i=1}^m$ are independent random vectors, following Gaussian distributions $N(0,I_{n_1})$ and $N(0,I_{n_2})$, respectively; 2) $U$ and $V$ have orthonormal columns. These assumptions can be easily relaxed. Suppose $U\in\bbR^{n_1\times r}$ and $V\in\bbR^{n_2\times r}$ are tall matrices of full column rank, but may not have orthonormal columns. Suppose $\{a_i\}_{i=1}^m$ and $\{b_i\}_{i=1}^m$ are independent random vectors, following Gaussian distributions $N(\mu_1,\Sigma_1)$ and $N(\mu_2,\Sigma_2)$, respectively, and $\mu_1,\mu_2,\Sigma_1,\Sigma_2$ are known, or can be estimated before hand. Let $\Sigma_1 = C_1C_1^T$ and $\Sigma_2 = C_2C_2^T$ denote the Cholesky decompositions of the covariance matrices. Then by a simple change of variables $a'_i = C_1^{-1} (a_i-\mu_1)$ and $b'_i = C_2^{-1} (b_i-\mu_2)$, the embeddings $U'$ and $V'$ estimated from $a'_i$, $b'_i$, and $y_i$ are orthogonal bases for the column spaces of $C_1^T U$ and $C_2^T V$. If $U$ (resp. $V$) has $s_1$ (resp. $s_2$) nonzero rows, then the columns of $U'$ (resp. $V'$) are jointly $s_1$ (resp. $s_2$) sparse over ``dictionary'' $C_1^T$ (resp. $C_2^T$). Provided that the condition numbers of $C_1$, $C_2$ (or $\Sigma_1$, $\Sigma_2$) are bounded by a constant independent of $n_1$ and $n_2$, the previous analysis translates to this scenario with virtually no change.


When the means and variance matrices of $a$ and $b$ are unknown, sample means and sample covariance matrices can be used in practice. Numerical experiments in Section \ref{sec:exp} show that using sample means and covariances causes no significant change in estimation accuracy. Instead of the explicit error bounds in Sections \ref{sec:dr} -- \ref{sec:optimal}, one can show asymptotic bounds, where the penalty for using sample means and covariance matrices estimated from $m$ samples is an extra term of $O(1/\sqrt{m})$ in the error bounds.

The Gaussianity and independence assumptions are crucial to the theoretical analysis of our joint dimensionality reduction approach. However, numerical experiments in Section \ref{sec:exp} confirm that our approach can estimate the embeddings accurately when the distributions are non-Gaussian (e.g., uniform, Poisson) or there are weak dependencies between $a$ and $b$.
Previous supervised dimensionality reduction approaches (SIR, pHd, SAVE) can be extended to non-Gaussian distributions that satisfy certain properties (linear conditional mean, constant conditional variance, etc.). We conjecture that the same extension applies also to our approach.

\subsection{Bivariate Nonlinear Functional}\label{sec:bivariate}
The simple bilinear regression model $f(\ba_i,\bb_i) = \ba_i^T \bb_i$ is a motivating application of this paper. In this case, $Q = I_r$, $\sigma_r=1$, $\sigma^2 = r(r+2)^2 -r$, $\tau_0^2 = r$, $\tau_1^2 = \tau_2^2 = r(r+2)$. By Theorems \ref{thm:lin}, \ref{thm:nonlin}, and \ref{thm:optimal}, we can derive explicit error bounds in terms of $m$, $n_1$, $n_2$, $s_1$, $s_2$, and $r$. For example, if $\sigma_{y|a,b}^2 = O(r)$ and $r = O(\min\{\log n_1, \log n_2\})$, then under the assumptions in Sections \ref{sec:dr}, \ref{sec:vs}, and \ref{sec:optimal}, the normalized error $\max\left\{\bbE\left[\frac{1}{\sqrt{r}}\|\tU^T\hU\|_\rmF\right],~ \bbE\left[\frac{1}{\sqrt{r}}\|\tV^T\hV\|_\rmF\right]\right\}$ is bounded by $O\left(\sqrt{\frac{n_1n_2}{m}} \right)$, $O\left(\sqrt{\frac{s_1s_2\log n_1 \log n_2}{m}}\right)$, and $O\left(\sqrt{\frac{(n_1+n_2)\log^2 m \log^4(n_1+n_2)}{m}}\right)$, respectively. Therefore, for bilinear regression model with reasonable signal to noise ratio, the aforementioned sample complexities hold for $r = O(\min\{\log n_1, \log n_2\})$, as opposed to just $r=O(1)$.

In general, the nonlinear functional $f(\cdot,\cdot)$ can take many forms (e.g., Mercer kernels, neural networks with two inputs), and can be unknown beforehand. 
However, we do need $Q = \bbE\left[\ba_1 f(\ba_1,\bb_1) \bb_1^T \right]$ to be nonsingular. Clearly, our approach fails when $f(\ba_1,\bb_1)$ is even in $\ba_1$ or $\bb_1$. This is an intrinsic limitation of supervised dimension reduction. In fact, SIR, pHd, and SAVE all require similar assumptions, and they fail when $f(\cdot,\cdot)$ is odd in both variables. In this sense, our approach complements the previous supervised dimensionality reduction approaches. In Section \ref{sec:exp}, we present examples for which pHd fails and our approach succeeds, and vice versa.



\subsection{Estimation of Rank and Sparsity}
Throughout the paper, we assume that the rank $r$ and sparsity levels $s_1,s_2$ are known. In practice, these parameters often need to be estimated from data. We give a partial solution in this section.

If the sample complexity satisfies $m=\Omega(n_1n_2)$, then $r$, $s_1$, $s_2$ can be estimated from $\hX_\mathrm{lin}$ as follows. Let $(J,K)$ and $(J,K)^c$ denote the support of $X=UQV^T$ (the set of indices where $X$ is nonzero) and its complement. Let $\sigma_i(\cdot)$ denote the $i$-th singular value of a matrix. Suppose for some $\eta>0$,
\[
\min_{(j,k)\in(J,K)}|X^{(j,k)}|\geq \eta, \quad \sigma_r(X) = \sigma_r(Q) \geq \eta.
\]
By Theorem \ref{thm:lin}, we can achieve $\|\hX_\mathrm{lin}-X\|_\rmF \leq \frac{1}{3}\eta$ with $m=\Omega(n_1n_2)$ samples. Then
\begin{align*}
\min_{(j,k)\in(J,K)}|\hX_\mathrm{lin}^{(j,k)}| \geq \frac{2}{3}\eta, \quad \max_{(j,k)\in(J,K)^c}|\hX_\mathrm{lin}^{(j,k)}| \leq \frac{1}{3}\eta,\quad
\sigma_r(\hX_\mathrm{lin}) \geq \frac{2}{3}\eta, \quad \sigma_{r+1}(\hX_\mathrm{lin}) \leq \frac{1}{3}\eta.
\end{align*}
Therefore, an entry is nonzero in $X$ if and only if the absolute value of the corresponding entry in $\hX_\mathrm{lin}$ is greater than $\frac{1}{2}\eta$. We can determine $s_1$ and $s_2$ by counting the number of such entries. Similarly, the rank $r$ of matrix $X$ can be determined by counting the number of singular values of $\hX_\mathrm{lin}$ greater than $\frac{1}{2}\eta$. In practice, such a threshold $\eta$ is generally unavailable. However, by gathering a sufficiently large number of samples, the entries and singular values of $\hX_\mathrm{lin}$ will vanish if the corresponding entries and singular values in $X$ are zero. 

Li \cite{Li1991,Li1992} derived $\chi^2$ tests to assess the true dimension $r$ of the embedding in SIR and pHd. We expect similar tests can be derived for our approach.




\subsection{Experiments}\label{sec:exp}
In this section, we verify our theoretical analysis with some numerical experiments. Here, the normalized subspace estimation error (NSEE) is defined by $\max\left\{\frac{1}{\sqrt{r}}\|\tU^T\hU\|_\rmF,~ \frac{1}{\sqrt{r}}\|\tV^T\hV\|_\rmF\right\}$.

First, we test the estimators $\hU,\hV$ (Sections \ref{sec:dr} and \ref{sec:optimal}) and $\hU',\hV'$ (Section \ref{sec:vs}) on two different models, dubbed \texttt{BILINEAR} and \texttt{BINARY}, both of which satisfy the light-tailed measurement condition:
\begin{itemize}
	\item Bilinear regression with additive Gaussian noise. Let $\mu_i = f(U^Ta_i,V^Tb_i) = a_i^TUV^T b_i$, and $y_i = \mu_i+z_i$, where $\{z_i\}_{i=1}^m$ are i.i.d. Gaussian random variables $N(0,1)$. 
	\item Logistic-type binary classification. Let $\mu_i = f(U^Ta_i,V^Tb_i) = \exp(-\|U^T a_i-V^T b_i\|_2^2)$, and $y_i\sim\operatorname{Ber}(\mu_i)$ is a Bernoulli random variable with mean $\mu_i$. 
\end{itemize}
Let $n_1=n_2=n$ and $s_1=s_2=s$. For each model, we conduct four experiments. Without variable selection, we fix $n$ (resp. $m$) and study how error varies with $m$ (resp. $n$). With variable selection, we fix $n,s$ (resp. $n,m$) and study how error varies with $m$ (resp. $s$). We repeat every experiments $100$ times, and show in Figure \ref{fig:log-log} the log-log plot of the mean error versus $m,n$ or $s$. The results for the two models are roughly the same, which verifies that our algorithm and theory apply to different regression problems. Nonlinearity in the model determines only the constants in the error bounds. The slopes of the plots in the first and third columns are roughly $-0.5$, which verifies the term $O(1/\sqrt{m})$ in the error bounds. The slopes of the plots in the second column are roughly $0.5$, which verifies the term $O(\sqrt{n_1+n_2})=O(\sqrt{n})$ in the error bound in Theorem \ref{thm:optimal}. The slopes of the plots in the fourth column are roughly $1$, which verifies the term $O(\sqrt{s_1s_2})=O(s)$ in the error bound in Theorem \ref{thm:nonlin}. 

\begin{figure}[htbp]
\centering
%
%
\begin{tikzpicture}[scale = 0.45]

\begin{axis}[%
width=2.282in,
height=1.8in,
at={(0in,0in)},
scale only axis,
xmin=6.90775527898214,
xmax=9.21034037197618,
xlabel={$\log(m)$},
ymin=-1.65133908669857,
ymax=0.164732059227385,
ylabel={$\log(\mathrm{NSEE})$},
axis background/.style={fill=white}
]
\addplot [color=blue,solid,line width=2.0pt,forget plot]
  table[row sep=crcr]{%
6.90775527898214	-0.215194441711561\\
7.60090245954208	-0.488714027390471\\
8.51719319141624	-0.929333134724102\\
9.21034037197618	-1.27141258575962\\
};
\end{axis}
\end{tikzpicture}%
%
%
\begin{tikzpicture}[scale = 0.45]

\begin{axis}[%
width=2.282in,
height=1.8in,
at={(0in,0in)},
scale only axis,
xmin=3.91202300542815,
xmax=6.21460809842219,
xlabel={$\log(n)$},
ymin=-1.96308556601509,
ymax=-0.147014420089143,
ylabel={$\log(\mathrm{NSEE})$},
axis background/.style={fill=white}
]
\addplot [color=blue,solid,line width=2.0pt,forget plot]
  table[row sep=crcr]{%
3.91202300542815	-1.63435455894625\\
4.60517018598809	-1.2740427772473\\
5.29831736654804	-0.923316299740511\\
6.21460809842219	-0.475745427157983\\
};
\end{axis}
\end{tikzpicture}%
%
%
\begin{tikzpicture}[scale = 0.45]

\begin{axis}[%
width=2.282in,
height=1.8in,
at={(0in,0in)},
scale only axis,
xmin=9.21034037197618,
xmax=11.5129254649702,
xlabel={$\log(m)$},
ymin=-4.08349742591548,
ymax=-2.26742627998953,
ylabel={$\log(\mathrm{NSEE})$},
axis background/.style={fill=white}
]
\addplot [color=blue,solid,line width=2.0pt,forget plot]
  table[row sep=crcr]{%
9.21034037197618	-2.55955806313597\\
9.90348755253613	-2.93485215962578\\
10.8197782844103	-3.39577582809748\\
11.5129254649702	-3.79136564276904\\
};
\end{axis}
\end{tikzpicture}%
%
%
\begin{tikzpicture}[scale = 0.45]

\begin{axis}[%
width=2.282in,
height=1.8in,
at={(0in,0in)},
scale only axis,
xmin=2.23694911729812,
xmax=4.67080616168402,
xlabel={$\log(s)$},
ymin=-3.33684670266371,
ymax=-1.41724009830129,
ylabel={$\log(\mathrm{NSEE})$},
axis background/.style={fill=white}
]
\addplot [color=blue,solid,line width=2.0pt,forget plot]
  table[row sep=crcr]{%
2.30258509299405	-3.33684670266371\\
2.99573227355399	-2.68991573176776\\
3.91202300542815	-2.0295046437283\\
4.60517018598809	-1.41724009830129\\
};
\end{axis}
\end{tikzpicture}
%
%
\begin{tikzpicture}[scale = 0.45]

\begin{axis}[%
width=2.282in,
height=1.8in,
at={(0in,0in)},
scale only axis,
xmin=9.90348755253613,
xmax=12.2060726455302,
xlabel={$\log(m)$},
ymin=-3.82992250251224,
ymax=-2.01385135658629,
ylabel={$\log(\mathrm{NSEE})$},
axis background/.style={fill=white}
]
\addplot [color=blue,solid,line width=2.0pt,forget plot]
  table[row sep=crcr]{%
9.90348755253613	-2.34853472496215\\
10.8197782844103	-2.8173177878998\\
11.5129254649702	-3.14841200116734\\
12.2060726455302	-3.49523913413638\\
};
\end{axis}
\end{tikzpicture}%
%
%
\begin{tikzpicture}[scale = 0.45]

\begin{axis}[%
width=2.282in,
height=1.8in,
at={(0in,0in)},
scale only axis,
xmin=2.30258509299405,
xmax=4.60517018598809,
xlabel={$\log(n)$},
ymin=-3.47200059299522,
ymax=-1.65592944706927,
ylabel={$\log(\mathrm{NSEE})$},
axis background/.style={fill=white}
]
\addplot [color=blue,solid,line width=2.0pt,forget plot]
  table[row sep=crcr]{%
2.30258509299405	-3.18594329230943\\
2.99573227355399	-2.78266797880402\\
3.91202300542815	-2.3097815954959\\
4.60517018598809	-1.94198674775507\\
};
\end{axis}
\end{tikzpicture}%
%
%
\begin{tikzpicture}[scale = 0.45]

\begin{axis}[%
width=2.282in,
height=1.8in,
at={(0in,0in)},
scale only axis,
xmin=12.2060726455302,
xmax=14.5086577385242,
xlabel={$\log(m)$},
ymin=-4.80549659137636,
ymax=-2.98942544545042,
ylabel={$\log(\mathrm{NSEE})$},
axis background/.style={fill=white}
]
\addplot [color=blue,solid,line width=2.0pt,forget plot]
  table[row sep=crcr]{%
12.2060726455302	-3.1982945629925\\
13.1223633774043	-3.79488492192837\\
13.8155105579643	-4.14964904055467\\
14.5086577385242	-4.59662747383429\\
};
\end{axis}
\end{tikzpicture}%
%
%
\begin{tikzpicture}[scale = 0.45]

\begin{axis}[%
width=2.282in,
height=1.8in,
at={(0in,0in)},
scale only axis,
xmin=1.49252226458343,
xmax=4.02893865327882,
xlabel={$\log(s)$},
ymin=-4.6840019801908,
ymax=-2.6835058284617,
ylabel={$\log(\mathrm{NSEE})$},
axis background/.style={fill=white}
]
\addplot [color=blue,solid,line width=2.0pt,forget plot]
  table[row sep=crcr]{%
1.6094379124341	-4.6840019801908\\
2.30258509299405	-4.17287904031326\\
2.99573227355399	-3.4981562412418\\
3.91202300542815	-2.6835058284617\\
};
\end{axis}
\end{tikzpicture}%
\caption{Log-log plots of mean error versus $m,n$ or $s$. The two rows are plots for the two models, \texttt{BILINEAR} and \texttt{BINARY}. Within each row, the four plots correspond to the four experiments.}
\label{fig:log-log}
\end{figure}
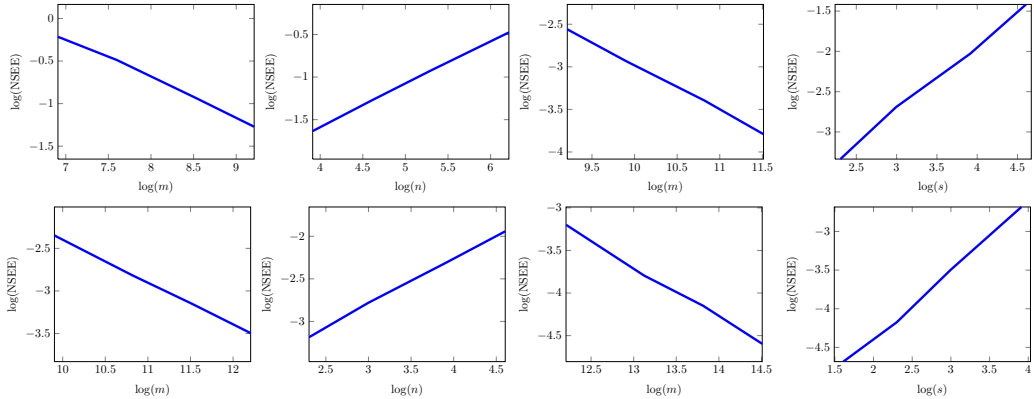

Next, we test how our estimator performs when the assumptions are violated, i.e., when 1) the true means and variances are replaced by sample means and variances, or 2) the entries of $a_i,b_i$ are i.i.d. following a uniform distribution on $[-\sqrt{3},\sqrt{3}]$, or 3) the entries of $a_i,b_i$ are i.i.d. following a Poisson distribution ($\lambda = 4$, normalized with zero mean and unit variance), or 4) $a_i,b_i$ are jointly Gaussian and weakly correlated (not independent). Clearly, there is no significant change in the performance.

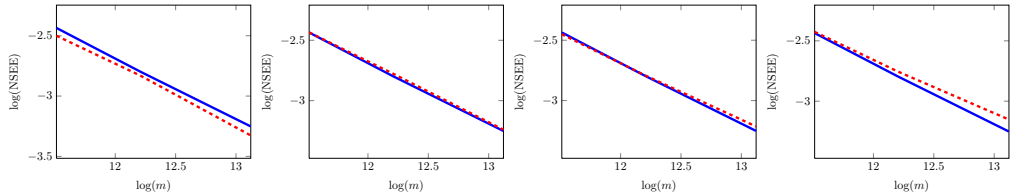
\begin{figure}[htbp]
\centering
%
%
\begin{tikzpicture}[scale=0.45]

\begin{axis}[%
width=2.26in,
height=1.783in,
at={(0in,0in)},
scale only axis,
xmin=11.5129254649702,
xmax=13.1223633774043,
xlabel={$\log(m)$},
ymin=-3.51575278132376,
ymax=-2.24637352458138,
ylabel={$\log(\mathrm{NSEE})$},
axis background/.style={fill=white}
]
\addplot [color=blue,solid,line width=2.0pt,forget plot]
  table[row sep=crcr]{%
11.5129254649702	-2.43563598165736\\
12.2060726455302	-2.79689097157546\\
13.1223633774043	-3.25176321291227\\
};
\addplot [color=red,dashed,line width=2.0pt,forget plot]
  table[row sep=crcr]{%
11.5129254649702	-2.50058616160947\\
12.2060726455302	-2.82956987009585\\
13.1223633774043	-3.32649032424779\\
};
\end{axis}
\end{tikzpicture}%
%
%
\begin{tikzpicture}[scale=0.45]

\begin{axis}[%
width=2.26in,
height=1.783in,
at={(0in,0in)},
scale only axis,
xmin=11.5129254649702,
xmax=13.1223633774043,
xlabel={$\log(m)$},
ymin=-3.47801872363627,
ymax=-2.20863946689389,
ylabel={$\log(\mathrm{NSEE})$},
axis background/.style={fill=white}
]
\addplot [color=blue,solid,line width=2.0pt,forget plot]
  table[row sep=crcr]{%
11.5129254649702	-2.43563598165736\\
12.2060726455302	-2.79689097157546\\
13.1223633774043	-3.25176321291227\\
};
\addplot [color=red,dashed,line width=2.0pt,forget plot]
  table[row sep=crcr]{%
11.5129254649702	-2.43489497761789\\
12.2060726455302	-2.77584270005024\\
13.1223633774043	-3.24120683231683\\
};
\end{axis}
\end{tikzpicture}%
%
%
\begin{tikzpicture}[scale=0.45]

\begin{axis}[%
width=2.26in,
height=1.783in,
at={(0in,0in)},
scale only axis,
xmin=11.5129254649702,
xmax=13.1223633774043,
xlabel={$\log(m)$},
ymin=-3.478389225656,
ymax=-2.20900996891362,
ylabel={$\log(\mathrm{NSEE})$},
axis background/.style={fill=white}
]
\addplot [color=blue,solid,line width=2.0pt,forget plot]
  table[row sep=crcr]{%
11.5129254649702	-2.43563598165736\\
12.2060726455302	-2.79689097157546\\
13.1223633774043	-3.25176321291227\\
};
\addplot [color=red,dashed,line width=2.0pt,forget plot]
  table[row sep=crcr]{%
11.5129254649702	-2.45040266370146\\
12.2060726455302	-2.79241663529589\\
13.1223633774043	-3.21464610661094\\
};
\end{axis}
\end{tikzpicture}%
%
%
\begin{tikzpicture}[scale=0.45]

\begin{axis}[%
width=2.26in,
height=1.783in,
at={(0in,0in)},
scale only axis,
xmin=11.5129254649702,
xmax=13.1223633774043,
xlabel={$\log(m)$},
ymin=-3.47333476862539,
ymax=-2.20395551188301,
ylabel={$\log(\mathrm{NSEE})$},
axis background/.style={fill=white}
]
\addplot [color=blue,solid,line width=2.0pt,forget plot]
  table[row sep=crcr]{%
11.5129254649702	-2.43563598165736\\
12.2060726455302	-2.79689097157546\\
13.1223633774043	-3.25176321291227\\
};
\addplot [color=red,dashed,line width=2.0pt,forget plot]
  table[row sep=crcr]{%
11.5129254649702	-2.42552706759612\\
12.2060726455302	-2.75601536346579\\
13.1223633774043	-3.15223244932912\\
};
\end{axis}
\end{tikzpicture}%
\caption{Log-log plots when the assumptions are violated. The blue solid lines are the performances when all the assumptions are met. The red dashed lines are the performances when the assumptions are violated in four different ways: 1) sample means and variances are used; 2) $a_i,b_i$ follow uniform distribution; 3) $a_i,b_i$ follow Poisson distribution; 4) $a_i$ and $b_i$ are weakly correlated.
}
\label{fig:violate}
\end{figure}

In the last experiment (see Figure \ref{fig:compare_pHd}), we compare our approach with principal Hessian direction (pHd) for two link functions: 1) $f(\ba_i,\bb_i) = \ba_i^T\bb_i = \sum_{j=1}^r \ba_i^{(j)}\bb_i^{(j)}$, which is odd in $\ba_i,\bb_i$, and 2) $f(\ba_i,\bb_i) = \sum_{j=1}^r \ba_i^{(j)2}\bb_i^{(j)2}$, which is even in $\ba_i,\bb_i$. For the odd function, our approach succeeds, but pHd fails. For the even function, our approach fails, but pHd succeeds.

\begin{figure}[htbp]
\centering
%
%
\begin{tikzpicture}[scale=0.45]

\begin{axis}[%
width=1.465in,
height=1.783in,
at={(0in,0in)},
scale only axis,
xmin=6.908,
xmax=9.21,
xlabel={$\log(m)$},
ymin=-2.8,
ymax=0,
ylabel={$\log(\mathrm{NSEE})$},
axis background/.style={fill=white}
]
\addplot [color=blue,solid,line width=2.0pt,forget plot]
  table[row sep=crcr]{%
6.90775527898214	-1.58020068553952\\
7.60090245954208	-1.90241849989738\\
8.51719319141624	-2.39339524848198\\
9.21034037197618	-2.73179799885045\\
};
\addplot [color=red,dashed,line width=2.0pt,forget plot]
  table[row sep=crcr]{%
6.90775527898214	-0.235154733757411\\
7.60090245954208	-0.212884630672413\\
8.51719319141624	-0.245352264966206\\
9.21034037197618	-0.246449811460298\\
};
\end{axis}
\end{tikzpicture}
%
%
\begin{tikzpicture}[scale=0.45]

\begin{axis}[%
width=1.465in,
height=1.783in,
at={(0in,0in)},
scale only axis,
xmin=6.908,
xmax=9.21,
xlabel={$\log(m)$},
ymin=-2.8,
ymax=0,
ylabel={$\log(\mathrm{NSEE})$},
axis background/.style={fill=white}
]
\addplot [color=blue,solid,line width=2.0pt,forget plot]
  table[row sep=crcr]{%
6.90775527898214	-0.357542349616585\\
7.60090245954208	-0.345415705555318\\
8.51719319141624	-0.35742561210398\\
9.21034037197618	-0.409221455986557\\
};
\addplot [color=red,dashed,line width=2.0pt,forget plot]
  table[row sep=crcr]{%
6.90775527898214	-1.1528123149822\\
7.60090245954208	-1.50108420216452\\
8.51719319141624	-2.0156654517911\\
9.21034037197618	-2.41665463667388\\
};
\end{axis}
\end{tikzpicture}%
\caption{Log-log plots of our approach (blue solid lines) versus pHd (red dashed lines). The left plot is for an odd function, and the right plot is for an even function.
}
\label{fig:compare_pHd}
\end{figure}
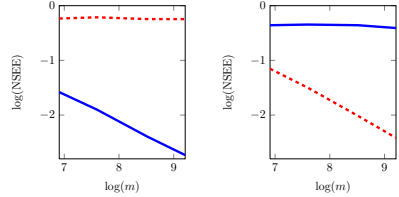

%



\begin{thebibliography}{10}
\providecommand{\url}[1]{#1}
\csname url@samestyle\endcsname
\providecommand{\newblock}{\relax}
\providecommand{\bibinfo}[2]{#2}
\providecommand{\BIBentrySTDinterwordspacing}{\spaceskip=0pt\relax}
\providecommand{\BIBentryALTinterwordstretchfactor}{4}
\providecommand{\BIBentryALTinterwordspacing}{\spaceskip=\fontdimen2\font plus
\BIBentryALTinterwordstretchfactor\fontdimen3\font minus
  \fontdimen4\font\relax}
\providecommand{\BIBforeignlanguage}[2]{{%
\expandafter\ifx\csname l@#1\endcsname\relax
\typeout{** WARNING: IEEEtran.bst: No hyphenation pattern has been}%
\typeout{** loaded for the language `#1'. Using the pattern for}%
\typeout{** the default language instead.}%
\else
\language=\csname l@#1\endcsname
\fi
#2}}
\providecommand{\BIBdecl}{\relax}
\BIBdecl

\bibitem{Tenenbaum2000}
J.~B. Tenenbaum, V.~D. Silva, and J.~C. Langford, ``A global geometric
  framework for nonlinear dimensionality reduction,'' \emph{Science}, vol. 290,
  no. 5500, pp. 2319--2323, Dec 2000.

\bibitem{Roweis2000}
S.~T. Roweis and L.~K. Saul, ``Nonlinear dimensionality reduction by locally
  linear embedding,'' \emph{Science}, vol. 290, no. 5500, pp. 2323--2326, Dec
  2000.

\bibitem{Ramisa2016}
A.~Ramisa, F.~Yan, F.~Moreno-Noguer, and K.~Mikolajczyk, ``{BreakingNews}:
  Article annotation by image and text processing,'' \emph{arXiv preprint
  arXiv:1603.07141}, 2016.

\bibitem{Noreika2013}
\BIBentryALTinterwordspacing
V.~Noreika, C.~M. Falter, and K.~Rubia, ``Timing deficits in
  attention-deficit/hyperactivity disorder ({ADHD}): Evidence from
  neurocognitive and neuroimaging studies,'' \emph{Neuropsychologia}, vol.~51,
  no.~2, pp. 235--266, Jan 2013.
\BIBentrySTDinterwordspacing

\bibitem{Chang2015}
\BIBentryALTinterwordspacing
S.~Chang, W.~Han, J.~Tang, G.-J. Qi, C.~C. Aggarwal, and T.~S. Huang,
  ``Heterogeneous network embedding via deep architectures,'' in \emph{Proc.
  Int. Conf. Knowledge Discovery and Data Mining}.\hskip 1em plus 0.5em minus
  0.4em\relax ACM, 2015.
\BIBentrySTDinterwordspacing

\bibitem{Hinton2006}
G.~E. Hinton and R.~R. Salakhutdinov, ``Reducing the dimensionality of data
  with neural networks,'' \emph{Science}, vol. 313, no. 5786, pp. 504--507, Jul
  2006.

\bibitem{Li1991}
K.~C. Li, ``Sliced inverse regression for dimension reduction,'' \emph{J. Amer.
  Statist. Assoc.}, vol.~86, no. 414, pp. 316--327, Jun 1991.

\bibitem{Li1992}
\BIBentryALTinterwordspacing
K.-C. Li, ``On principal {Hessian} directions for data visualization and
  dimension reduction: Another application of {Stein}'s lemma,'' \emph{J. Amer.
  Statist. Assoc.}, vol.~87, no. 420, pp. 1025--1039, Dec 1992.
\BIBentrySTDinterwordspacing

\bibitem{Cook2000}
\BIBentryALTinterwordspacing
R.~Dennis~Cook, ``{SAVE}: a method for dimension reduction and graphics in
  regression,'' \emph{Commun. in Stat. - Theory and Methods}, vol.~29, no.
  9-10, pp. 2109--2121, Jan 2000.
\BIBentrySTDinterwordspacing

\bibitem{Xia2002}
\BIBentryALTinterwordspacing
Y.~Xia, H.~Tong, W.~K. Li, and L.-X. Zhu, ``An adaptive estimation of dimension
  reduction space,'' \emph{J. the Royal Stat. Society: Series B (Stat.
  Methodology)}, vol.~64, no.~3, pp. 363--410, Aug 2002.
\BIBentrySTDinterwordspacing

\bibitem{Plan2014}
Y.~Plan, R.~Vershynin, and E.~Yudovina, ``High-dimensional estimation with
  geometric constraints,'' \emph{arXiv preprint arXiv:1404.3749}, 2014.

\bibitem{Netrapalli2013}
P.~Netrapalli, P.~Jain, and S.~Sanghavi, ``Phase retrieval using alternating
  minimization,'' in \emph{Advances in Neural Inform. Process. Syst.}, 2013,
  pp. 2796--2804.

\bibitem{Lee2015a}
K.~Lee, Y.~Li, M.~Junge, and Y.~Bresler, ``Blind recovery of sparse signals
  from subsampled convolution,'' \emph{arXiv preprint arXiv:1511.06149}, 2015.

\bibitem{Li2007}
\BIBentryALTinterwordspacing
L.~Li, ``Sparse sufficient dimension reduction,'' \emph{Biometrika}, vol.~94,
  no.~3, pp. 603--613, Aug 2007.
\BIBentrySTDinterwordspacing

\bibitem{Chen2010}
\BIBentryALTinterwordspacing
X.~Chen, C.~Zou, and R.~D. Cook, ``Coordinate-independent sparse sufficient
  dimension reduction and variable selection,'' \emph{Ann. Stat.}, vol.~38,
  no.~6, pp. 3696--3723, Dec 2010.
\BIBentrySTDinterwordspacing

\bibitem{Lee2013}
K.~Lee, Y.~Wu, and Y.~Bresler, ``Near optimal compressed sensing of sparse
  rank-one matrices via sparse power factorization,'' \emph{arXiv preprint
  arXiv:1312.0525}, 2013.

\bibitem{Tropp2011}
J.~A. Tropp, ``User-friendly tail bounds for sums of random matrices,''
  \emph{Found. Comput. Math.}, vol.~12, no.~4, pp. 389--434, Aug 2011.

\end{thebibliography}

\newpage

\section{Proofs}

\subsection{Proof of Lemma \ref{lem:r_ind}}

Obviously, these vectors are all zero mean Gaussian random vectors. Independence follows from two facts:
\begin{enumerate}
	\item $\{a_i\}_{i=1}^m$ and $\{b_i\}_{i=1}^m$ are independent Gaussian vectors. 
	\item $\bbE[\ba_i \ta_i^T] = \bbE [U^T a_i a_i^T \tU] = U^T I_{n_1} \tU = 0$, and $\bbE[\bb_i \tb_i^T] = \bbE [V^T b_i b_i^T \tV] = V^T I_{n_2} \tV = 0$. (Uncorrelated Gaussian random vector are independent.)
\end{enumerate}
Covariance matrices are easy to compute. For example, $\operatorname{Cov}(\ba_i) = \bbE[\ba_i \ba_i^T] = \bbE [U^T a_i a_i^T U] = U^T I_{n_1} U = I_r$.

\subsection{Proof of Theorem \ref{thm:lin}}\label{sec:proof_lin}
We start by proving some useful lemmas.

\begin{lemma} \label{lem:independent}
$y_i$ and $\ta_i,\tb_i$ are independent.
\end{lemma}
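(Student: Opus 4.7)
The plan is to combine the Markov chain assumption \eqref{eq:markov} with the joint independence established in Lemma \ref{lem:r_ind}.

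First, I would unpack what \eqref{eq:markov} means in terms of conditional distributions: since $y_i$ depends on $(a_i, b_i)$ only through $(U^T a_i, V^T b_i) = (\ba_i, \bb_i)$, the conditional law satisfies
\[
p(y_i \mid a_i, b_i) = p(y_i \mid \ba_i, \bb_i).
\]
Equivalently, one can write $y_i = g(\ba_i, \bb_i, \xi_i)$ for some noise variable $\xi_i$ that is independent of $(a_i, b_i)$ (and hence of $(\ba_i, \bb_i, \ta_i, \tb_i)$).

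Next, by Lemma \ref{lem:r_ind}, the four random vectors $\ba_i, \ta_i, \bb_i, \tb_i$ are mutually independent; in particular, the pair $(\ba_i, \bb_i)$ is independent of the pair $(\ta_i, \tb_i)$. Thus the joint density factors as
\[
p(y_i, \ba_i, \bb_i, \ta_i, \tb_i) = p(y_i \mid \ba_i, \bb_i)\, p(\ba_i, \bb_i)\, p(\ta_i, \tb_i).
\]
Marginalizing over $(\ba_i, \bb_i)$ yields $p(y_i, \ta_i, \tb_i) = p(y_i)\, p(\ta_i, \tb_i)$, which is exactly the independence claim.

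I do not expect any real obstacle: this lemma is essentially a bookkeeping consequence of the Markov structure plus the orthogonal decomposition of the Gaussian inputs, with Lemma \ref{lem:r_ind} doing the heavy lifting. The only subtlety is being explicit that the ``extra randomness'' in $y_i$ beyond $(\ba_i, \bb_i)$ (if any, as allowed by \eqref{eq:con_var}) is independent of $(\ta_i, \tb_i)$, which follows from the Markov chain interpretation.
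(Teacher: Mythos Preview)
Your proposal is correct and follows essentially the same route as the paper: both arguments combine the Markov chain assumption \eqref{eq:markov} with the independence from Lemma \ref{lem:r_ind}, factor the joint density as $p(y_i\mid\ba_i,\bb_i)\,p(\ba_i,\bb_i)\,p(\ta_i,\tb_i)$, and then integrate out $(\ba_i,\bb_i)$. The paper phrases the key step as the ``contraction property of conditional independence,'' but the density computation it writes out is exactly yours.
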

\begin{proof}
By Lemma \ref{lem:r_ind}, $\ta_i,\tb_i$ and $\ba_i,\bb_i$ are independent. By the Markov chain assumption \eqref{eq:markov}, $y_i$ and $\ta_i,\tb_i$ are conditionally independent given $\ba_i,\bb_i$. Therefore, by contraction property of conditional independence, $y_i$ and $\ta_i,\tb_i$ are independent.

When $y_i$ is a continuous random variable, the contraction property can be proved as follows:
\begin{align}
p(y_i,\ta_i,\tb_i) =&~ p(y_i,\ta_i,\tb_i | \ba_i,\bb_i)\cdot p(\ba_i,\bb_i) \nonumber\\
=&~ p(y_i| \ba_i,\bb_i) \cdot p(\ta_i,\tb_i | \ba_i,\bb_i)  \cdot p(\ba_i,\bb_i) \label{eq:con_ind}\\
=&~ p(y_i| \ba_i,\bb_i) \cdot p(\ta_i,\tb_i)  \cdot p(\ba_i,\bb_i) \label{eq:ind} \\
=&~ p(y_i) \cdot p(\ta_i,\tb_i). \nonumber
\end{align}
Equation \eqref{eq:con_ind} follows from the conditional independence of $y_i$ and $(\ta_i,\tb_i)$ given $(\ba_i,\bb_i)$. Equation \eqref{eq:ind} follows from the independence between $(\ta_i,\tb_i)$ and $(\ba_i,\bb_i)$.
\end{proof}

\begin{lemma}\label{lem:mean_var}
\[
\bbE\left[\ba_i y_i \bb_i^T \right] = Q, \quad \bbE\left[\norm{\ba_i y_i \bb_i^T-Q}_\rmF^2 \right] \leq r^2\sigma_{y|a,b}^2 + \sigma^2,
\]
\[
\bbE\left[\left|y_i\right|^2\right] \leq \sigma_{y|a,b}^2+ \tau_0^2, \quad \bbE\left[\norm{\ba_iy_i}_2^2\right] \leq r\sigma_{y|a,b}^2+\tau_1^2, \quad \bbE\left[\norm{\bb_iy_i}_2^2 \right] \leq r\sigma_{y|a,b}^2+\tau_2^2.
\]
\end{lemma}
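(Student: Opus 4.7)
\subsection*{Proof plan for Lemma \ref{lem:mean_var}}

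The plan is to handle all five inequalities by a single unified device: condition on $(\ba_i,\bb_i)$ and use the tower property together with the hypothesis $\var[y_i \mid a_i,b_i]\le \sigma_{y|a,b}^2$. Because the Markov chain \eqref{eq:markov} makes $(\ba_i,\bb_i)=(U^Ta_i,V^Tb_i)$ a sufficient statistic for $y_i$, we have $\bbE[y_i\mid \ba_i,\bb_i]=f(\ba_i,\bb_i)=:\mu_i$ and $\var[y_i\mid \ba_i,\bb_i]\le \sigma_{y|a,b}^2$. Write $y_i=\mu_i+e_i$, where the ``noise'' $e_i:=y_i-\mu_i$ satisfies $\bbE[e_i\mid\ba_i,\bb_i]=0$ and $\bbE[e_i^2\mid\ba_i,\bb_i]\le \sigma_{y|a,b}^2$. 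This decomposition is all that I need.

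First I would do the mean identity: $\bbE[\ba_i y_i \bb_i^T]=\bbE[\ba_i\,\bbE[y_i\mid\ba_i,\bb_i]\,\bb_i^T]=\bbE[\ba_i f(\ba_i,\bb_i)\bb_i^T]=Q$, directly from the definition of $Q$. For the variance-type bound, I substitute $y_i=\mu_i+e_i$ into $\ba_i y_i \bb_i^T-Q=(\ba_i\mu_i\bb_i^T-Q)+\ba_i e_i \bb_i^T$. The cross term in the Frobenius norm expansion vanishes once we condition on $(\ba_i,\bb_i)$ because $\bbE[e_i\mid\ba_i,\bb_i]=0$, so the two pieces are orthogonal in $L^2$. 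The deterministic piece gives $\sigma^2$ by definition, and the noise piece is $\bbE[e_i^2\|\ba_i\|_2^2\|\bb_i\|_2^2]\le \sigma_{y|a,b}^2\,\bbE[\|\ba_i\|_2^2]\,\bbE[\|\bb_i\|_2^2]=\sigma_{y|a,b}^2\cdot r\cdot r$, where I used Lemma \ref{lem:r_ind} (independence of $\ba_i,\bb_i$ and $\chi^2_r$ norms with mean $r$). This yields the required $r^2\sigma_{y|a,b}^2+\sigma^2$.

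The remaining three bounds are essentially the same calculation repeated. For $\bbE[|y_i|^2]$, the conditional variance decomposition gives $\bbE[y_i^2]=\bbE[\var(y_i\mid\ba_i,\bb_i)]+\bbE[\mu_i^2]\le \sigma_{y|a,b}^2+\tau_0^2$. For $\bbE[\|\ba_i y_i\|_2^2]=\bbE[\|\ba_i\|_2^2 y_i^2]$, I condition on $(\ba_i,\bb_i)$, pull $\|\ba_i\|_2^2$ outside, and bound the inner conditional second moment of $y_i$ by $\sigma_{y|a,b}^2+\mu_i^2$; the first piece contributes $\sigma_{y|a,b}^2\,\bbE[\|\ba_i\|_2^2]=r\sigma_{y|a,b}^2$ and the second piece is exactly $\tau_1^2$ by definition. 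The bound for $\bbE[\|\bb_i y_i\|_2^2]$ is symmetric, giving $r\sigma_{y|a,b}^2+\tau_2^2$.

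There is no real obstacle here; the only subtle point worth stating carefully is the justification that $\bbE[y_i\mid\ba_i,\bb_i]=f(\ba_i,\bb_i)$ and $\var[y_i\mid\ba_i,\bb_i]\le\sigma_{y|a,b}^2$ despite the fact that the assumption \eqref{eq:con_var} is phrased in terms of $(a_i,b_i)$ rather than $(\ba_i,\bb_i)$. This follows immediately from the Markov chain \eqref{eq:markov}, which implies $p(y_i\mid a_i,b_i)=p(y_i\mid \ba_i,\bb_i)$, so both the conditional mean and the conditional variance agree with their counterparts conditional on $(\ba_i,\bb_i)$. Once this observation is recorded, the whole lemma reduces to applications of the tower property and independence of $\ba_i$ and $\bb_i$.
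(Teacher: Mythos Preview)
Your proposal is correct and follows essentially the same approach as the paper: decompose $y_i$ into $f(\ba_i,\bb_i)$ plus a conditionally mean-zero error, kill the cross term by conditioning, and bound the noise piece using $\var[y_i\mid a_i,b_i]\le\sigma_{y|a,b}^2$ together with independence of $\ba_i$ and $\bb_i$. The paper conditions on $(a_i,b_i)$ rather than $(\ba_i,\bb_i)$ and leaves the last three inequalities as ``similar,'' whereas you condition on $(\ba_i,\bb_i)$ and explicitly justify this via the Markov chain \eqref{eq:markov}; this is a cosmetic difference and your extra care on that point is, if anything, an improvement.
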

\begin{proof}
We prove the equality using the tower property of conditional expectation:
\begin{align*}
\bbE\left[\ba_i y_i \bb_i^T \right] = &~ \bbE\left[\ba_i ~\bbE[ y_i | a_i,b_i ]~\bb_i^T \right] = \bbE\left[\ba_i f(\ba_i,\bb_i)\bb_i^T \right] = Q.
\end{align*}
For the first inequality, note that
\begin{align*}
\norm{\ba_i y_i \bb_i^T-Q}_\rmF^2 =&~ \norm{\ba_i \left[y_i-f(\ba_i,\bb_i)\right] \bb_i^T}_\rmF^2 + 2\left<\ba_i \left[y_i-f(\ba_i,\bb_i)\right] \bb_i^T,\ba_i f(\ba_i,\bb_i) \bb_i^T-Q\right>\\
&~ + \norm{\ba_i f(\ba_i,\bb_i) \bb_i^T-Q}_\rmF^2 \\
=&~ |y_i-f(\ba_i,\bb_i)|^2 \cdot\norm{\ba_i}_2^2\cdot \norm{\bb_i}_2^2 + 2 \left[y_i-f(\ba_i,\bb_i)\right] \cdot \left<\ba_i \bb_i^T,\ba_i f(\ba_i,\bb_i) \bb_i^T-Q\right> \\
&~ + \norm{\ba_i f(\ba_i,\bb_i) \bb_i^T-Q}_\rmF^2.
\end{align*}
Hence we have
\begin{align*}
\bbE\left[\norm{\ba_i y_i \bb_i^T-Q}_\rmF^2~ \middle| ~a,b \right] \leq \sigma_{y|a,b}^2\cdot\norm{\ba_i}_2^2\cdot \norm{\bb_i}_2^2 + \norm{\ba_i f(\ba_i,\bb_i) \bb_i^T-Q}_\rmF^2.
\end{align*}
Therefore,
\begin{align*}
\bbE\left[ \norm{\ba_i y_i \bb_i^T-Q}_\rmF^2 \right] = &~ \bbE\left[ \bbE\left[ \norm{\ba_i y_i \bb_i^T-Q}_\rmF^2~ \middle|~ a,b \right] \right] \\
\leq &~ \sigma_{y|a,b}^2\cdot\bbE\left[\norm{\ba_i}_2^2\right] \cdot \bbE\left[\norm{\bb_i}_2^2\right] + \bbE\left[\norm{\ba_i f(\ba_i,\bb_i) \bb_i^T-Q}_\rmF^2\right]\\
= &~ r^2\sigma_{y|a,b}^2 + \sigma^2.
\end{align*}
The other inequalities can be proved similarly.
\end{proof}

Next, we prove Theorem \ref{thm:lin}.
\begin{proof}[Proof of Theorem \ref{thm:lin}]
Since
\begin{align}
a_i y_i b_i^T = &~  (UU^T + \tU\tU^T)a_i y_i b_i^T (VV^T+\tV\tV^T)  \nonumber \\
= &~ U\ba_i y_i \bb_i^TV^T + \tU\ta_i y_i \tb_i^T\tV^T + U\ba_i y_i \tb_i^T\tV^T +\tU\ta_i y_i \bb_i^TV^T, \label{eq:split}
\end{align}
we have
\begin{align}
\bbE\left[a_i y_i b_i^T \right] = &~ \bbE\left[U\ba_i y_i \bb_i^TV^T\right] + \bbE\left[\tU\ta_i y_i \tb_i^T\tV^T\right] + \bbE\left[U\ba_i y_i \tb_i^T\tV^T\right] + \bbE\left[\tU\ta_i y_i \bb_i^TV^T\right] \nonumber\\
= &~ U\bbE\left[\ba_i y_i \bb_i^T\right]V^T + \tU\bbE\left[\ta_i\right]  \bbE\left[y_i\right] \bbE\left[\tb_i^T\right]\tV^T + U\bbE\left[\ba_i y_i\right] \bbE\left[\tb_i^T\right]\tV^T + \tU\bbE\left[\ta_i\right]\bbE\left[ y_i \bb_i^T\right]V^T \nonumber\\
=&~  UQV^T + 0 + 0 + 0 \nonumber\\
=&~ X. \label{eq:mean}
\end{align}
The second line follows from independence of $y_i,\ta_i,\tb_i$ (see Lemma \ref{lem:independent}).
Note that
\begin{align*}
\norm{a_i y_i b_i^T -X}_\rmF^2  = &~ \norm{U(\ba_i y_i \bb_i^T-Q)V^T}_\rmF^2 + \norm{\tU\ta_i y_i \tb_i^T\tV^T}_\rmF^2  + \norm{ U\ba_i y_i \tb_i^T\tV^T}_\rmF^2 + \norm{\tU\ta_i y_i \bb_i^TV^T}_\rmF^2\\
= &~ \norm{\ba_i y_i \bb_i^T-Q}_\rmF^2 + \norm{\ta_i y_i \tb_i^T}_\rmF^2 + \norm{\ba_i y_i \tb_i^T}_\rmF^2 + \norm{\ta_i y_i \bb_i^T}_\rmF^2\\
= &~ \norm{\ba_i y_i \bb_i^T-Q}_\rmF^2 + \norm{\ta_i}_2^2 |y_i|^2 \norm{\tb_i}_2^2 + \norm{\ba_i y_i}_2^2 \norm{\tb_i}_2^2 + \norm{\ta_i}_2^2 \norm{\bb_i y_i}_2^2.
\end{align*}
where the first equation follows from Pythagorean theorem, the second line follows from $\norm{U\Sigma V^T}_\rmF = \norm{\Sigma}_\rmF$ for matrices $U,V$ of orthonormal columns, and the third line follows from $\norm{ab^T}_\rmF = \norm{a}_2\norm{b}_2$.
By Lemma \ref{lem:mean_var},
\begin{align}
\bbE\left[\norm{a_i y_i b_i^T -X}_\rmF^2 \right] \leq &~ (r^2\sigma_{y|a,b}^2+\sigma^2) + (n_1-r)(n_2-r)(\sigma_{y|a,b}^2+\tau_0^2) \nonumber\\
&~ + (n_2-r)(r\sigma_{y|a,b}^2+\tau_1^2) + (n_1-r)(r\sigma_{y|a,b}^2+\tau_2^2) \nonumber\\
= &~ n_1n_2\sigma_{y|a,b}^2 + \sigma^2+(n_1-r)(n_2-r)\tau_0^2+(n_2-r)\tau_1^2+ (n_1-r)\tau_2^2. \label{eq:variance}
\end{align}

By \eqref{eq:mean} and \eqref{eq:variance}, and the independence between $\{a_i y_i b_i^T\}_{i=1}^m$, we have
\[
\bbE \left[ \hX_\mathrm{lin} \right] = \frac{1}{m} \sum_{i=1}^{m}\bbE\left[ a_i y_i b_i^T\right] = X,
\]
\begin{align*}
\bbE \left[ \norm{\hX_\mathrm{lin}-X}_\rmF^2 \right] = &~ \frac{1}{m^2} \sum_{i=1}^{m}\bbE\left[\norm{ a_i y_i b_i^T - X}_\rmF^2 \right] \\
\leq &~ \frac{n_1n_2\sigma_{y|a,b}^2 + \sigma^2+(n_1-r)(n_2-r)\tau_0^2+(n_2-r)\tau_1^2+ (n_1-r)\tau_2^2}{m}.
\end{align*}
\end{proof}

\subsection{Proof of Corollary \ref{cor:lin}}

\begin{lemma} \label{lem:subspace}
\begin{align*}
\max\left\{ \norm{\tU^T\hU}_\rmF,~\norm{\tV^T\hV}_\rmF \right\} \leq \frac{1}{\sigma_r}\norm{X-\hU\hSigma\hV^T}_\rmF.
\end{align*}
\end{lemma}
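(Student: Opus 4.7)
The plan is to use the extremal property of orthogonal projections together with the factorization $X = UQV^T$, which avoids any unwanted dependence on the singular values of $\hSigma$. The key observation is that $\hU\hSigma\hV^T$ has its column space contained in $\operatorname{col}(\hU)$, so among all matrices whose columns lie in $\operatorname{col}(\hU)$, the one closest to $X$ in Frobenius norm is the projection $\hU\hU^T X$. This gives the ``for-free'' inequality
\[
\norm{X - \hU\hU^T X}_\rmF \le \norm{X - \hU\hSigma\hV^T}_\rmF.
\]

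Next I would rewrite the left-hand side using $X = UQV^T$. Since $V$ has orthonormal columns, $\norm{(I - \hU\hU^T)UQV^T}_\rmF = \norm{(I - \hU\hU^T)UQ}_\rmF$. A standard trace computation then shows that for any matrix $A$ with $r$ columns and any $r \times r$ matrix $Q$ one has $\norm{AQ}_\rmF \ge \sigma_r(Q)\norm{A}_\rmF$: expand $\norm{AQ}_\rmF^2 = \operatorname{tr}(A^TA\cdot QQ^T)$ and use that $\operatorname{tr}(MN) \ge \sigma_{\min}(N)\operatorname{tr}(M)$ for positive semidefinite $M, N$. Applied to $A = (I - \hU\hU^T)U$, this gives
\[
\norm{(I - \hU\hU^T)UQ}_\rmF \ge \sigma_r \norm{(I - \hU\hU^T)U}_\rmF.
\]

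The last ingredient is the identity $\norm{(I - \hU\hU^T)U}_\rmF = \norm{\tU^T\hU}_\rmF$, which follows by expanding both sides squared and simplifying to the common expression $r - \norm{U^T\hU}_\rmF^2$, using only $U^TU = \hU^T\hU = I_r$ and $UU^T + \tU\tU^T = I_{n_1}$. Chaining the four inequalities yields $\sigma_r\norm{\tU^T\hU}_\rmF \le \norm{X - \hU\hSigma\hV^T}_\rmF$. The companion bound on $\norm{\tV^T\hV}_\rmF$ comes from running the same argument on the transposed identities $X^T = VQ^TU^T$ and $(\hU\hSigma\hV^T)^T = \hV\hSigma\hU^T$, noting $\sigma_r(Q^T) = \sigma_r$.

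The proof is a short chain of elementary linear-algebraic steps, so I do not anticipate any serious obstacle. The only subtle point worth flagging is that the naive manipulation $\tU^T\hU\hSigma = \tU^T(\hU\hSigma\hV^T - X)\hV$ (obtained from $\tU^T X = 0$) would produce a bound with $\sigma_r(\hSigma)$ rather than $\sigma_r(Q)$ in the denominator, which is strictly weaker than what the lemma claims. The plan above deliberately routes through the projection $\hU\hU^T X$ and then exploits $X = UQV^T$ precisely in order to obtain the correct constant $1/\sigma_r$.
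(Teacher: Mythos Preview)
Your proof is correct and follows essentially the same route as the paper's. The paper introduces an explicit orthonormal basis $\widetilde{\hU}$ for $\operatorname{col}(\hU)^\perp$ and writes the key step as $\norm{\widetilde{\hU}^T U}_\rmF \le \frac{1}{\sigma_r}\norm{\widetilde{\hU}^T UQV^T}_\rmF = \frac{1}{\sigma_r}\norm{\widetilde{\hU}^T(X-\hU\hSigma\hV^T)}_\rmF \le \frac{1}{\sigma_r}\norm{X-\hU\hSigma\hV^T}_\rmF$, together with the same symmetry identity $\norm{\tU^T\hU}_\rmF^2 = r - \norm{U^T\hU}_\rmF^2 = \norm{\widetilde{\hU}^T U}_\rmF^2$; your use of the projector $I-\hU\hU^T$ and the best-approximation inequality is exactly this argument in different notation.
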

\begin{proof}[Proof of Lemma \ref{lem:subspace}]
We only prove the bound for $\norm{\tU^T\hU}_\rmF$. The bound for $\norm{\tV^T\hV}_\rmF$ can be proved similarly. Let $\widetilde{\hU}\in\bbR^{n_1\times (n_1-r)}$ denote a matrix of orthonormal columns that satisfies $\widetilde{\hU}^T\hU = 0$, then
\begin{align}
 \norm{\tU^T\hU}_\rmF = &~\norm{\widetilde{\hU}^T U}_\rmF \leq \frac{1}{\sigma_r} \norm{\widetilde{\hU}^T UQV^T}_\rmF = \frac{1}{\sigma_r} \norm{\widetilde{\hU}^T (UQV^T-\hU\hSigma\hV^T)}_\rmF \nonumber\\
= &~ \frac{1}{\sigma_r} \norm{\widetilde{\hU}^T (X-\hU\hSigma\hV^T)}_\rmF \leq \frac{1}{\sigma_r} \norm{\widetilde{\hU}^T}_2 \norm{X-\hU\hSigma\hV^T}_\rmF \leq \frac{1}{\sigma_r}  \norm{X-\hU\hSigma\hV^T}_\rmF. \nonumber
\end{align}
Here, the first equation is due to the following two identities:
\begin{align*}
&\norm{\tU^T\hU}_\rmF^2 = \norm{\tU\tU^T\hU}_\rmF^2 = \norm{\hU}_\rmF^2-\norm{UU^T\hU}_\rmF^2 = r- \norm{U^T\hU}_\rmF^2,\\
&\norm{\widetilde{\hU}^T U}_\rmF^2 = \norm{\widetilde{\hU}\widetilde{\hU}^TU}_\rmF^2 = \norm{U}_\rmF^2-\norm{\hU\hU^TU}_\rmF^2 = r- \norm{\hU^TU}_\rmF^2.
\end{align*}
\end{proof}

\begin{proof}[Proof of Corollary \ref{cor:lin}]
Obviously,
\begin{align}
\norm{\hU\hSigma\hV^T-X}_\rmF \leq  \norm{\hU\hSigma\hV^T-\hX_\mathrm{lin}}_\rmF+\norm{\hX_\mathrm{lin}-X}_\rmF \leq 2\norm{\hX_\mathrm{lin}-X}_\rmF, \label{eq:rankr}
\end{align}
which follows from triangle inequality, and the fact that $\hU\hSigma\hV^T$ is the best rank-$r$ approximation of $\hX_\mathrm{lin}$. Hence, by Lemma \ref{lem:subspace} and Jensen's inequality,
\[
\max\left\{\bbE\left[\norm{\tU^T\hU}_\rmF\right],~ \bbE\left[\norm{\tV^T\hV}_\rmF\right]\right\} \leq \frac{2}{\sigma_r} \bbE\left[\norm{X-\hX_\mathrm{lin}}_\rmF\right] \leq \frac{2}{\sigma_r}\sqrt{\bbE\left[\norm{X-\hX_\mathrm{lin}}_\rmF^2\right]}.
\]
Clearly, $\sigma_r,\sigma_{y|a,b},\sigma,\tau_0,\tau_1,\tau_2$ are all independent of $n_1$, $n_2$, and $m$. Since $r=O(1)$, we complete the proof by applying the mean squared error bound in Theorem \ref{thm:lin}.
\end{proof}

\subsection{Proof of Theorem \ref{thm:nonlin}}\label{sec:proof_nonlin}
First, we establish some useful lemmas. Define
\[
\Delta \defeq (\Omega-\Omega) \bigcap \calB_{n_1\times n_2},
\]
\[
\norm{Y}_{\Delta^\circ} \defeq \sup\limits_{X\in\Delta} \left<Y,X\right>.
\]
Here, ${\Delta^\circ}$ is the polar set of $\Delta$. Lemma \ref{lem:seminorm} follows from the properties of polar sets.
\begin{lemma} \label{lem:seminorm}
For symmetric set $\Delta$, $\norm{\cdot}_{\Delta^\circ}$ is a pseudo-norm, or equivalently
\begin{enumerate}
	\item $\norm{Y}_{\Delta^\circ} \geq 0$, and $\norm{0}_{\Delta^\circ} = 0$.
	\item $\norm{cY}_{\Delta^\circ} = |c| \cdot\norm{Y}_{\Delta^\circ}$.
	\item $\norm{Y_1+Y_2}_{\Delta^\circ} \leq \norm{Y_1}_{\Delta^\circ} + \norm{Y_2}_{\Delta^\circ} $.
\end{enumerate}
Properties 2 and 3 imply that $\norm{\cdot}_{\Delta^\circ}$ is convex.
\end{lemma}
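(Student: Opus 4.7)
The plan is to verify each of the three listed properties directly from the definition $\norm{Y}_{\Delta^\circ}=\sup_{X\in\Delta}\langle Y,X\rangle$, using only that $\Delta$ is symmetric (i.e.\ $X\in\Delta\Rightarrow -X\in\Delta$) and that $0\in\Delta$. The set $\Delta=(\Omega-\Omega)\cap\calB_{n_1\times n_2}$ clearly contains $0$ (take any $X\in\Omega$ and form $X-X$) and is symmetric (if $X=X_1-X_2\in\Omega-\Omega$ then $-X=X_2-X_1\in\Omega-\Omega$, and the unit ball is symmetric). These are the only structural facts needed; nothing specific to the ambient geometry enters.

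For property~1, nonnegativity is immediate by evaluating the supremum at $X=0\in\Delta$, giving $\norm{Y}_{\Delta^\circ}\ge\langle Y,0\rangle=0$, and $\norm{0}_{\Delta^\circ}=\sup_{X\in\Delta}\langle 0,X\rangle=0$. For property~2, the case $c\ge 0$ is just linearity of the inner product in $Y$ pulled out of the supremum. For $c<0$, write $c=-|c|$ and perform the change of variables $X'=-X$; symmetry of $\Delta$ means this bijects $\Delta$ with itself, so
\[
\norm{cY}_{\Delta^\circ}=\sup_{X\in\Delta}\langle -|c|Y,X\rangle=|c|\sup_{X\in\Delta}\langle Y,-X\rangle=|c|\sup_{X'\in\Delta}\langle Y,X'\rangle=|c|\,\norm{Y}_{\Delta^\circ}.
\]
For property~3, the standard sup-of-sum bound gives
\[
\sup_{X\in\Delta}\bigl[\langle Y_1,X\rangle+\langle Y_2,X\rangle\bigr]\le\sup_{X\in\Delta}\langle Y_1,X\rangle+\sup_{X\in\Delta}\langle Y_2,X\rangle,
\]
which is exactly subadditivity.

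Finally, the stated consequence that $\norm{\cdot}_{\Delta^\circ}$ is convex follows from combining properties~2 and~3: for any $\lambda\in[0,1]$,
\[
\norm{\lambda Y_1+(1-\lambda)Y_2}_{\Delta^\circ}\le\norm{\lambda Y_1}_{\Delta^\circ}+\norm{(1-\lambda)Y_2}_{\Delta^\circ}=\lambda\norm{Y_1}_{\Delta^\circ}+(1-\lambda)\norm{Y_2}_{\Delta^\circ}.
\]
There is no real obstacle here; the proof is a textbook verification that the support function of a symmetric set containing the origin is a sublinear functional. The only subtlety worth stating explicitly is the use of symmetry in the $c<0$ case of property~2, since without symmetry the sign flip would not preserve the feasible set.
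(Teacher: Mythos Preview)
Your proof is correct and complete. The paper itself does not give a detailed argument here, stating only that the lemma ``follows from the properties of polar sets''; your proposal supplies exactly the standard verification that this one-line remark is pointing to, so there is no divergence in approach to discuss.
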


\begin{lemma} \label{lem:proj}
If $\Omega$ is a cone, then
\[
\norm{P_\Omega \hX_\mathrm{lin} - X}_\rmF \leq 2\norm{\hX_\mathrm{lin} - X}_{\Delta^\circ}
\]
\end{lemma}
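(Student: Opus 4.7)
\textbf{Proof proposal for Lemma \ref{lem:proj}.} The plan is to combine the optimality property of the projection $P_\Omega$ with the observation that, because $\Omega$ is a cone, the error direction $P_\Omega\hX_\mathrm{lin}-X$ lives (up to normalization) in the set $\Delta$ that defines the pseudo-norm. Throughout I assume, as is implicit in the statement, that the ground truth $X=UQV^T$ lies in $\Omega$, which is precisely what makes $P_\Omega$ a useful operator here.

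First I would write $\hX \defeq P_\Omega\hX_\mathrm{lin}$ and invoke the defining inequality of the Frobenius projection onto $\Omega$: since $X\in\Omega$,
\begin{equation*}
\norm{\hX_\mathrm{lin}-\hX}_\rmF^2 \;\leq\; \norm{\hX_\mathrm{lin}-X}_\rmF^2.
\end{equation*}
Expanding both sides and cancelling the common $\norm{\hX_\mathrm{lin}}_\rmF^2$ term, then using the identity $\norm{\hX}_\rmF^2-\norm{X}_\rmF^2 = \norm{\hX-X}_\rmF^2 + 2\langle X,\hX-X\rangle$, a routine rearrangement yields the one-sided inner-product bound
\begin{equation*}
\norm{\hX-X}_\rmF^2 \;\leq\; 2\,\bigl\langle \hX_\mathrm{lin}-X,\;\hX-X\bigr\rangle.
\end{equation*}

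Next I would exploit the cone assumption. Because $\Omega$ is a cone, so is $\Omega-\Omega$, and therefore the normalized difference $D\defeq (\hX-X)/\norm{\hX-X}_\rmF$ belongs to $(\Omega-\Omega)\cap\calB_{n_1\times n_2}=\Delta$ (assuming $\hX\neq X$; the degenerate case is trivial). By the definition of $\norm{\cdot}_{\Delta^\circ}$,
\begin{equation*}
\langle \hX_\mathrm{lin}-X,\,D\rangle \;\leq\; \norm{\hX_\mathrm{lin}-X}_{\Delta^\circ}.
\end{equation*}
Substituting back gives $\norm{\hX-X}_\rmF^2 \leq 2\,\norm{\hX-X}_\rmF\cdot\norm{\hX_\mathrm{lin}-X}_{\Delta^\circ}$, and dividing through by $\norm{\hX-X}_\rmF$ produces the claimed bound.

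The argument is essentially standard for projection-based estimators on cones, so there is no serious obstacle; the only subtle point is verifying that $D\in\Delta$, which relies crucially on $\Omega$ being a cone (so that $\Omega-\Omega$ is closed under positive scaling) together with the Frobenius-unit-ball intersection built into the definition of $\Delta$. If $\Omega$ were merely convex rather than a cone, the normalization step would fail and one would obtain a weaker, localized bound instead.
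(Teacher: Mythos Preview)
Your proof is correct. It differs from the paper's, which does not argue directly but instead invokes an external black-box result (Corollary~8.3 of \cite{Plan2014}) giving $\norm{P_\Omega\hX_\mathrm{lin}-X}_\rmF\leq\max\{t,\tfrac{2}{t}\norm{\hX_\mathrm{lin}-X}_{(t\Delta)^\circ}\}$ for all $t>0$, then uses the cone property to show $\tfrac{1}{t}\norm{\cdot}_{(t\Delta)^\circ}=\norm{\cdot}_{\Delta^\circ}$ and finally sends $t\to 0$. Your argument is more elementary and self-contained: you extract the key inequality $\norm{\hX-X}_\rmF^2\leq 2\langle\hX_\mathrm{lin}-X,\hX-X\rangle$ straight from the projection optimality, and then use the cone property only once, to place the normalized error direction in $\Delta$. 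In effect you have reproved (and sharpened to the cone case) the cited corollary from first principles, avoiding both the external reference and the limiting argument. The paper's route has the advantage of making clear where the general, non-cone version of the bound would come from; yours has the advantage of being complete on the page.
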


\begin{proof}
Since $\Omega$ is a cone, we have $(\Omega-\Omega)\bigcap t\calB_{n_1\times n_2} = t\Delta$ for $t>0$. Moreover,
\begin{align*}
\frac{1}{t}\norm{Y}_{(t\Delta)^\circ} = \frac{1}{t}\sup\limits_{X\in t\Delta} \left<Y,X\right> = \sup\limits_{X\in\Delta} \left<Y,X\right> = \norm{Y}_{\Delta^\circ}.
\end{align*}
By \cite[Corollary 8.3]{Plan2014}, for every $t>0$ we have
\[
\norm{P_{\Omega} \hX_\mathrm{lin} -X }_\rmF \leq \max\left\{t, ~\frac{2}{t}\norm{\hX_\mathrm{lin} - X}_{(t\Delta)^\circ}\right\} = \max\left\{t,~2\norm{\hX_\mathrm{lin} - X}_{\Delta^\circ}\right\}.
\]
Lemma \ref{lem:proj} follows from letting $t$ go to $0$.
\end{proof}

The next lemma follows trivially from the definitions of $\Omega_1$ and $\Omega_2$.
\begin{lemma} \label{lem:hX2}
Suppose $\Omega_{12} = \Omega_1 \bigcap \Omega_2 = \{X\in\bbR^{n_1\times n_2}: \norm{X^{(:,k)}}_0\leq s_1,~\forall k\in [n_2],~\norm{X}_{0,c}\leq s_2\}$. Then
\[\hX_2 = P_{\Omega_2} P_{\Omega_1} \hX_\mathrm{lin} = P_{\Omega_{12}} \hX_\mathrm{lin}.\] 
\end{lemma}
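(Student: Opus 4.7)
The plan is to show that the Frobenius-norm minimization over $\Omega_{12}$ decouples across columns in such a way that the inner (per-column) optimization is exactly $P_{\Omega_1}$ applied column-wise, while the outer (column-selection) optimization is exactly $P_{\Omega_2}$. Because the sparsity constraint on each column and the constraint on the number of nonzero columns act on disjoint ``levels'' of the matrix structure, the joint projection factors into the sequential projection.

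Concretely, I would first write, for any $X\in\Omega_{12}$,
\[
\norm{X-\hX_\mathrm{lin}}_\rmF^2 = \sum_{k=1}^{n_2}\norm{X^{(:,k)}-\hX_\mathrm{lin}^{(:,k)}}_2^2,
\]
and parametrize $X$ by the set $S\subseteq[n_2]$ of its nonzero columns (so $|S|\leq s_2$) together with the vectors $\{X^{(:,k)}\}_{k\in S}$, each constrained to be $s_1$-sparse. For $k\notin S$, the $k$th term contributes $\norm{\hX_\mathrm{lin}^{(:,k)}}_2^2$. For $k\in S$, minimizing over $s_1$-sparse vectors is a standard hard-thresholding problem, and the minimizer is exactly $(P_{\Omega_1}\hX_\mathrm{lin})^{(:,k)}$, i.e.\ the column obtained by zeroing out all but the $s_1$ largest-magnitude entries of $\hX_\mathrm{lin}^{(:,k)}$, yielding residual $\norm{\hX_\mathrm{lin}^{(:,k)}}_2^2 - \norm{(P_{\Omega_1}\hX_\mathrm{lin})^{(:,k)}}_2^2$.

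Substituting this inner minimum, the outer problem becomes
\[
\min_{|S|\leq s_2}\Bigl[\norm{\hX_\mathrm{lin}}_\rmF^2 - \sum_{k\in S}\norm{(P_{\Omega_1}\hX_\mathrm{lin})^{(:,k)}}_2^2\Bigr],
\]
which is solved by choosing $S$ to index the $s_2$ columns of $P_{\Omega_1}\hX_\mathrm{lin}$ of largest $\ell_2$ norm. That is precisely the action of $P_{\Omega_2}$ on $P_{\Omega_1}\hX_\mathrm{lin}$, and the matrix assembled from the corresponding sparsified columns (zero elsewhere) is $P_{\Omega_2}P_{\Omega_1}\hX_\mathrm{lin}=\hX_2$. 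Since this matrix lies in $\Omega_1\cap\Omega_2=\Omega_{12}$ and attains the minimum of $\|\cdot-\hX_\mathrm{lin}\|_\rmF$ over $\Omega_{12}$, it equals $P_{\Omega_{12}}\hX_\mathrm{lin}$.

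The only subtle point is that there is no genuine obstacle here, just bookkeeping: the factorization works because the two constraints act on different structural levels (within-column support versus which-columns-nonzero), so that the per-column minimizer does not depend on the choice of $S$ and vice versa. Tie-breaking (if multiple columns or entries share the same magnitude) is handled by any consistent convention, since both $P_{\Omega_2}P_{\Omega_1}$ and $P_{\Omega_{12}}$ can be taken to use the same rule, and the argument above shows equality of the resulting objective values even when the minimizer is nonunique. No auxiliary results beyond the definitions of $\Omega_1,\Omega_2,\Omega_{12}$ and the column-wise Pythagorean identity are needed.
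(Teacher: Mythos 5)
Your proof is correct, and it is precisely the decoupling argument that the paper leaves implicit: the paper offers no proof at all, stating only that the lemma ``follows trivially from the definitions of $\Omega_1$ and $\Omega_2$.'' Your writeup — column-wise Pythagorean decomposition, per-column hard thresholding independent of the column-selection set $S$, then optimal choice of $S$ — is the natural and complete justification of that claim, including the correct handling of tie-breaking.
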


\begin{lemma} \label{lem:max}
Suppose $\Delta_{12} = (\Omega_{12}-\Omega_{12}) \bigcap \calB_{n_1\times n_2}$. Then
\[
\norm{Y}_{\Delta_{12}^\circ} \leq \min\left\{\norm{Y}_\rmF,~\sqrt{2s_1s_2} \max_{j,k}\left|Y^{(j,k)}\right|\right\}.
\]
\end{lemma}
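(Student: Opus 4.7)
The plan is to unpack the definition $\|Y\|_{\Delta_{12}^\circ}=\sup_{X\in\Delta_{12}}\langle Y,X\rangle$ and bound $\langle Y,X\rangle$ in two different ways, one giving each term of the minimum.

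For the first bound, I would simply note that every $X\in\Delta_{12}$ lies in the unit Frobenius ball $\calB_{n_1\times n_2}$ by construction, so Cauchy–Schwarz gives $\langle Y,X\rangle\le\|Y\|_\rmF\|X\|_\rmF\le\|Y\|_\rmF$. Taking the supremum over $X\in\Delta_{12}$ yields $\|Y\|_{\Delta_{12}^\circ}\le\|Y\|_\rmF$.

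For the second (sparsity-based) bound, the key step is a support-size argument. Write $X=X_1-X_2$ with $X_1,X_2\in\Omega_{12}$. Since each $X_i$ has at most $s_2$ nonzero columns and at most $s_1$ nonzero entries per column, each $X_i$ has at most $s_1s_2$ nonzero entries, so the support $S\defeq\operatorname{supp}(X)$ satisfies $|S|\le 2s_1s_2$. Writing $Y_S$ for the restriction of $Y$ to $S$ (zero outside $S$), we have $\langle Y,X\rangle=\langle Y_S,X\rangle$. Apply Cauchy–Schwarz and then bound $\|Y_S\|_\rmF$ coordinate-wise:
\begin{align*}
\langle Y,X\rangle\;\le\;\|Y_S\|_\rmF\,\|X\|_\rmF\;\le\;\sqrt{|S|}\,\max_{j,k}|Y^{(j,k)}|\;\le\;\sqrt{2s_1s_2}\,\max_{j,k}|Y^{(j,k)}|,
\end{align*}
using again that $\|X\|_\rmF\le 1$. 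Taking the supremum over $X\in\Delta_{12}$ gives the second bound, and combining with the first yields the minimum.

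There is no genuine obstacle here; the only point that requires a moment of care is the support-size calculation, since the bound $|S|\le 2s_1s_2$ (rather than $4s_1s_2$, which one might naively write by doubling both $s_1$ and $s_2$) comes from counting nonzero \emph{entries} of each $X_i$ directly as $s_1s_2$ and then using subadditivity of supports under subtraction. Everything else is just Cauchy–Schwarz plus the $\ell_\infty$-to-$\ell_2$ inequality on a set of size at most $2s_1s_2$.
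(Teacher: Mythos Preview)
Your proof is correct and follows essentially the same approach as the paper. The only cosmetic difference is in the second bound: the paper observes $\Delta_{12}\subset\{X:\norm{X}_0\le 2s_1s_2,\ \norm{X}_\rmF\le 1\}\subset\{X:\norm{\vect(X)}_1\le\sqrt{2s_1s_2}\}$ and then applies H\"older's inequality, whereas you apply Cauchy--Schwarz on the restricted support and then bound $\norm{Y_S}_\rmF\le\sqrt{|S|}\max_{j,k}|Y^{(j,k)}|$; these are the same inequality read in two directions.
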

\begin{proof}
By Cauchy-Schwarz inequality,
\begin{align}
\norm{Y}_{\Delta_{12}^\circ} = \sup\limits_{X\in\Delta_{12}} \left<Y,X\right> \leq \sup\limits_{X\in\Delta_{12}} \norm{X}_\rmF\norm{Y}_\rmF = \norm{Y}_\rmF. \label{eq:c-s}
\end{align}
Since
\[
\Delta_{12} \subset \{X\in\bbR^{n_1\times n_2}: \norm{X}_0\leq 2s_1s_2, \norm{X}_\rmF\leq 1\} \subset \{X\in\bbR^{n_1\times n_2}: \norm{\vect(X)}_1\leq \sqrt{2s_1s_2}\} \eqdef \Delta_{\ell_1},
\]
By H\"{o}lder's inequality,
\begin{align}
\norm{Y}_{\Delta_{12}^\circ} = \sup\limits_{X\in\Delta_{12}} \left<Y,X\right>\leq \sup\limits_{X\in\Delta_{\ell_1}} \left<Y,X\right> \leq \sup\limits_{X\in\Delta_{\ell_1}} \norm{\vect(X)}_1\norm{\vect(Y)}_\infty = \sqrt{2s_1s_2}\max_{j,k} \left|Y^{(j,k)}\right|. \label{eq:holder}
\end{align}
The lemma follows from \eqref{eq:c-s} and \eqref{eq:holder}.
\end{proof}

\begin{lemma} \label{lem:degen_gauss}
Suppose $u\sim N(0,I_n)$, $\tilde{u}\sim N(0,P)$ and $P\in\bbR^{n\times n}$ is a projection matrix. Then for a convex function $g(\cdot)$, we have $\bbE[g(\tilde{u})]\leq \bbE[g(u)]$.
\end{lemma}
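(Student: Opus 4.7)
The plan is to realize $\tilde u$ as a conditional expectation of the richer Gaussian vector $u$ and then invoke Jensen's inequality for conditional expectations. The key observation is that since $P$ is a projection matrix we have $P^2=P$ and $P=P^T$, so if $u\sim N(0,I_n)$ then $Pu\sim N(0,PP^T)=N(0,P)$, which has the same distribution as $\tilde u$. Hence it suffices to prove $\bbE[g(Pu)]\le \bbE[g(u)]$ for the single vector $u\sim N(0,I_n)$.

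Next I would use the orthogonal splitting $u=Pu+(I-P)u$. Both components are Gaussian, and they are uncorrelated because $\bbE[(Pu)((I-P)u)^T]=P(I-P)=0$; for jointly Gaussian vectors this forces independence. Together with $\bbE[(I-P)u]=0$, this yields
\[
\bbE[u\mid Pu] = Pu + \bbE[(I-P)u\mid Pu] = Pu + \bbE[(I-P)u] = Pu.
\]

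Now I would apply Jensen's inequality for conditional expectations to the convex function $g$:
\[
g(Pu) = g\bigl(\bbE[u\mid Pu]\bigr) \le \bbE\bigl[g(u)\mid Pu\bigr].
\]
Taking expectations on both sides and using the tower property gives $\bbE[g(Pu)]\le \bbE[g(u)]$. Combined with the distributional identity $Pu\stackrel{d}{=}\tilde u$ from the first step, this is exactly the claim $\bbE[g(\tilde u)]\le\bbE[g(u)]$.

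There is no real obstacle: the argument is a textbook application of Jensen's inequality once one recognizes $\tilde u$ as the projection of a standard Gaussian, and the only point that requires a line of justification is the equality $\bbE[u\mid Pu]=Pu$, which follows from the independence of $Pu$ and $(I-P)u$ for jointly Gaussian vectors. Mild care is needed to make sure $g$ is integrable against both distributions so that the conditional Jensen inequality applies; if one wants to avoid that assumption, one can truncate $g$ by $\min\{g,M\}$ (still convex), take limits, and invoke monotone convergence.
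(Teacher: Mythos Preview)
Your argument is correct and is essentially the paper's proof: both decompose $u$ in distribution as $\tilde u$ plus an independent mean-zero Gaussian $(I-P)u$ and apply Jensen's inequality, with you phrasing this via $\bbE[u\mid Pu]=Pu$ and the paper writing $g(\tilde u+\bbE[\bar u])\le\bbE[g(\tilde u+\bar u)]$ for an independent $\bar u\sim N(0,I-P)$. One small correction to your closing remark: $\min\{g,M\}$ is \emph{not} convex in general (e.g.\ $g(x)=x^2$, $M=1$), so if you ever need the truncation device you should instead use something like $g_M(x)=\max_{\|y\|\le M}\{g(y)+\langle \partial g(y),x-y\rangle\}$ or simply assume integrability.
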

\begin{proof}
Let $\bar{u} \sim  N(0,I-P)$ be independent from $\tilde{u}$, then $\tilde{u}+\bar{u}$ have the same distribution as $u$.
\[
\bbE[g(\tilde{u})] = \bbE[g(\tilde{u}+ \bbE[\bar{u}])] \leq \bbE[g(\tilde{u}+ \bar{u})] = \bbE[g(u)],
\]
where the inequality follows from Jensen's inequality.
\end{proof}

\begin{lemma} \label{lem:order}
Suppose $u_{i}^{(j)}$ ($i=1,2,\cdots,m$, $j=1,2,\cdots,n$) are i.i.d. Gaussian random variables $N(0,1)$. Then
\[
\bbE\left[\max_{j\in[n]} \sqrt{\sum_{i=1}^{m}\sigma_i^2 u_i^{(j)2}}\right] \leq \sqrt{(3\log n + 2) \sum_{i=1}^m \sigma_i^2}.
\]
\end{lemma}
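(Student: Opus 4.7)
The plan is to reduce the problem to bounding $\bbE[\max_j Z_j^2]$ for $Z_j^2 \defeq \sum_i \sigma_i^2 u_i^{(j)2}$, and then to use the standard moment-generating-function (Chernoff-style) argument for maxima of nonnegative random variables. First, since $\sqrt{\cdot}$ is concave, Jensen's inequality gives
\[
\bbE\!\left[\max_{j\in[n]} Z_j\right] \;=\; \bbE\!\left[\sqrt{\max_{j\in[n]} Z_j^2}\right] \;\leq\; \sqrt{\bbE\!\left[\max_{j\in[n]} Z_j^2\right]},
\]
so it suffices to show $\bbE[\max_j Z_j^2] \leq (3\log n + 2)\sum_i \sigma_i^2$. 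Next, for any $\lambda>0$ in the admissible range, the usual exp-sum trick yields
\[
\bbE\!\left[\max_{j\in[n]} Z_j^2\right] \;\leq\; \frac{1}{\lambda}\log \bbE\!\left[\sum_{j=1}^n e^{\lambda Z_j^2}\right] \;=\; \frac{\log n}{\lambda} + \frac{1}{\lambda}\log \bbE\!\left[e^{\lambda Z_1^2}\right].
\]

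The second step is to compute and control $\bbE[e^{\lambda Z_1^2}]$. Since the $u_i^{(1)}$ are i.i.d.\ standard Gaussian, independence and the chi-square MGF give
\[
\bbE\!\left[e^{\lambda Z_1^2}\right] \;=\; \prod_{i=1}^m \bbE\!\left[e^{\lambda \sigma_i^2 u_i^{(1)2}}\right] \;=\; \prod_{i=1}^m (1-2\lambda\sigma_i^2)^{-1/2},
\]
valid for $\lambda < \frac{1}{2\max_i \sigma_i^2}$. The key inequality is the elementary bound $(1-x)^{-1/2} \leq e^{x}$ for $x \in [0, 2/3]$ (equivalently, $e^{2x}(1-x) \geq 1$, easily verified on that interval). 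Choosing $\lambda = \frac{1}{3 \max_i \sigma_i^2}$ makes $2\lambda \sigma_i^2 \leq 2/3$ for every $i$, so
\[
\log \bbE\!\left[e^{\lambda Z_1^2}\right] \;\leq\; \sum_{i=1}^m 2\lambda \sigma_i^2 \;=\; 2\lambda \sum_{i=1}^m \sigma_i^2.
\]
Plugging this back yields $\bbE[\max_j Z_j^2] \leq \frac{\log n}{\lambda} + 2\sum_i \sigma_i^2 = 3\max_i \sigma_i^2 \log n + 2\sum_i \sigma_i^2$. Using the trivial bound $\max_i \sigma_i^2 \leq \sum_i \sigma_i^2$ then gives $\bbE[\max_j Z_j^2] \leq (3\log n + 2)\sum_i \sigma_i^2$, which after the square-root step from the first paragraph completes the proof.

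Neither step is hard; the only point requiring a moment of care is the choice of $\lambda$. One wants $\lambda$ small enough that the MGF remains finite (and the bound on $(1-x)^{-1/2}$ applies) but large enough that the $\log n / \lambda$ term doesn't dominate. The choice $\lambda = 1/(3\max_i \sigma_i^2)$ is tailored so that the elementary inequality $(1-x)^{-1/2}\leq e^x$ just barely applies at every coordinate, giving the clean constants $3$ and $2$ in the final bound. A degenerate edge case --- all $\sigma_i = 0$ --- must be excluded trivially, since the statement is vacuous there. Everything else is a routine exp-sum argument combined with Jensen's inequality.
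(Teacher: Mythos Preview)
Your proof is correct and follows essentially the same route as the paper's: Jensen to reduce to $\bbE[\max_j Z_j^2]$, the exp-sum/Chernoff trick with the chi-square MGF, the elementary inequality $(1-x)^{-1/2}\le e^x$ on $[0,2/3]$, and a specific choice of the tuning parameter. The only cosmetic difference is that the paper picks $t=\frac{1}{3\sum_i\sigma_i^2}$ while you pick $\lambda=\frac{1}{3\max_i\sigma_i^2}$ and then use $\max_i\sigma_i^2\le\sum_i\sigma_i^2$; both choices land on the same final bound.
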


\begin{proof}
Let $d^{(j)} \defeq \sqrt{\sum_{i=1}^{m}\sigma_i^2 u_i^{(j)2}}$, and $d \defeq \max_{j\in[n]} d^{(j)}$.
By Jensen's inequality,
\[
e^{t\bbE[d^2]} \leq \bbE\left[e^{td^2}\right] \leq \sum_{j=1}^{n}\bbE\left[e^{td^{(j)2}}\right] = n \prod_{i=1}^m\bbE\left[e^{t\sigma_i^2 u_i^{(1)2}}\right] =  n \prod_{i=1}^m \left(1-2t\sigma_i^2\right)^{-\frac{1}{2}},\quad \forall~ 0<t<\frac{1}{2\max_i\sigma_i^2}.
\]
Therefore,
\[
\bbE[d^2] \leq \frac{\log n}{t} - \frac{1}{2t}\sum_{i=1}^{m}\log (1-2t\sigma_i^2),\quad \forall~ 0<t<\frac{1}{2\max_i\sigma_i^2}.
\]
It is easy to verify that $-\frac{1}{2}\log(1-2x) \leq 2x$ for $0<x<\frac{1}{3}$. Choose $t = \frac{1}{3\sum_{i=1}^m \sigma_i^2}$, then $0<t\sigma_i^2<\frac{1}{3}$. Hence
\[
\bbE[d^2] \leq \frac{\log n}{t} +\frac{1}{t}\sum_{i=1}^m 2t\sigma_i^2 = (3\log n + 2) \sum_{i=1}^m \sigma_i^2,
\]
\[
\bbE[d] \leq \sqrt{\bbE[d^2]} \leq \sqrt{(3\log n + 2) \sum_{i=1}^m \sigma_i^2}.
\]
\end{proof}

Next, we prove Theorem \ref{thm:nonlin}.
\begin{proof}[Proof of Theorem \ref{thm:nonlin}]
By \eqref{eq:split} and triangle inequality,
\begin{align}
\norm{\hX_\mathrm{lin} - X}_{\Delta_{12}^\circ} \leq &~ \norm{ U \left(\frac{1}{m}\sum_{i=1}^m\ba_i y_i \bb_i^T -Q\right)V^T}_{\Delta_{12}^\circ} + \norm{\tU\left(\frac{1}{m}\sum_{i=1}^m\ta_i y_i \tb_i^T\right)\tV^T}_{\Delta_{12}^\circ}   \nonumber\\
&~ + \norm{U \left(\frac{1}{m}\sum_{i=1}^m\ba_i y_i \tb_i^T \right)\tV^T}_{\Delta_{12}^\circ} + \norm{\tU\left(\frac{1}{m}\sum_{i=1}^m\ta_i y_i \bb_i^T\right)V^T }_{\Delta_{12}^\circ} \nonumber\\
\eqdef &~ T_1+T_2+T_3+T_4. \label{eq:split2}
\end{align}
Next, we bound the expectation of the four terms. For $T_1$, we use \eqref{eq:c-s}:
\begin{align}
\bbE[T_1] \leq &~ \bbE\left[\norm{U \left(\frac{1}{m}\sum_{i=1}^m\ba_i y_i \bb_i^T -Q\right)V^T}_\rmF \right] \nonumber\\
= &~ \bbE\left[\norm{\left(\frac{1}{m}\sum_{i=1}^m\ba_i y_i \bb_i^T -Q\right)}_\rmF \right]  \nonumber\\
\leq &~ \sqrt{\bbE\left[\norm{\left(\frac{1}{m}\sum_{i=1}^m\ba_i y_i \bb_i^T -Q\right)}_\rmF^2 \right]} \nonumber\\
\leq &~ \sqrt{\frac{r^2\sigma_{y|a,b}^2+\sigma^2}{m}}. \label{eq:T1}
\end{align}

Suppose $u_i\sim N(0,I_{n_1})$, $v_i\sim N(0,I_{n_2})$, $\{u_i\}_{i=1}^m$, $\{v_i\}_{i=1}^m$, and $\{y_i\}_{i=1}^m$, $\{\ba_i\}_{i=1}^m$, $\{\bb_i\}_{i=1}^m$ are independent. Replacing $\tU\ta_i,\tV\tb_i$ in $T_2$ by $u_i,v_i$, by Lemma \ref{lem:degen_gauss} and \eqref{eq:holder},
\begin{align}
\bbE[T_2] \leq \bbE\left[\norm{\frac{1}{m}\sum_{i=1}^m u_i y_i v_i^T}_{\Delta_{12}^\circ}\right]\leq \frac{\sqrt{2s_1s_2}}{m}~ \bbE\left[\max_{j,k} \Big|\sum_{i=1}^m u_i^{(j)}y_i v_i^{(k)}\Big|\right]. \label{eq:T2_first}
\end{align}
Conditioned on $\{y_i,v_i\}_{i=1}^{m}$, the distribution of $\sum_{i=1}^m u_i^{(j)}y_i v_i^{(k)}$ is $N(0,\sum_{i=1}^m y_i^2 v_i^{(k)2})$. By Lemma \ref{lem:order},
\begin{align*}
\bbE\left[\max_{j,k} \Big|\sum_{i=1}^m u_i^{(j)}y_i v_i^{(k)}\Big|~ \middle|~ \{y_i,v_i\}_{i=1}^{m}\right] \leq &~ \max_k \sqrt{(3\log n_1 +2)}\cdot\sqrt{\sum_{i=1}^m y_i^2 v_i^{(k)2}}\\
 \leq &~ 2\sqrt{\log n_1} \max_k \sqrt{\sum_{i=1}^m y_i^2 v_i^{(k)2}}.
\end{align*}
The second line follows from $n_1\geq 8$. Conditioned on $\{y_i\}_{i=1}^{m}$ alone, apply Lemma \ref{lem:order} one more time,
\begin{align*}
\bbE\left[\max_{j,k} \Big|\sum_{i=1}^m u_i^{(j)}y_i v_i^{(k)}\Big|~ \middle|~ \{y_i\}_{i=1}^{m}\right] \leq &~ 2\sqrt{\log n_1} \bbE\left[ \max_k \sqrt{\sum_{i=1}^m y_i^2 v_i^{(k)2}} ~ \middle|~ \{y_i\}_{i=1}^{m}\right] \\
\leq &~ 4\sqrt{\log n_1 \log n_2} \sqrt{\sum_{i=1}^m y_i^2}.
\end{align*}
By \eqref{eq:T2_first},
\begin{align*}
\bbE[T_2] \leq &~ \frac{\sqrt{2s_1s_2}}{m}~ \bbE\left[\max_{j,k} \Big|\sum_{i=1}^m u_i^{(j)}y_i v_i^{(k)}\Big|\right] \\
\leq &~ \frac{4\sqrt{2s_1s_2\log n_1\log n_2}}{m} \bbE\left[ \sqrt{\sum_{i=1}^m y_i^2} \right]\\
\leq &~ 4\sqrt{\frac{2s_1s_2\log n_1\log n_2 \cdot (\sigma_{y|a,b}^2+\tau_0^2)}{m}}.
\end{align*}

The bounds on the expectations of $T_3$ and $T_4$ can be derived similarly. 
\begin{align*}
\bbE[T_3] \leq &~ \bbE\left[\norm{\frac{1}{m}\sum_{i=1}^m U\ba_i y_i v_i^T}_{\Delta_{12}^\circ}\right] \leq \frac{\sqrt{2s_1s_2}}{m}~ \bbE\left[\max_{j,k} \Big|\sum_{i=1}^m (U\ba_i y_i)^{(j)}v_i^{(k)}\Big|\right] \\
\leq &~ \frac{2\sqrt{2s_1s_2\log n_2}}{m}~\bbE\left[\max_{j} \sqrt{\sum_{i=1}^m (U\ba_i y_i)^{(j)2} } \right] \\
\leq &~ \frac{2\sqrt{2s_1s_2\log n_2}}{m}~\bbE\left[\sqrt{\sum_{j=1}^{n_1}\sum_{i=1}^m (U\ba_i y_i)^{(j)2} } \right] \\
\leq &~ \frac{2\sqrt{2s_1s_2\log n_2}}{m}~\sqrt{\bbE\left[\sum_{i=1}^m \norm{\ba_i y_i}_2^2 \right]} \\
\leq &~ 2\sqrt{\frac{2s_1s_2\log n_2 \cdot (r\sigma_{y|a,b}^2+\tau_1^2)}{m}}. \\
\bbE[T_4] \leq &~ 2\sqrt{\frac{2s_1s_2\log n_1 \cdot(r\sigma_{y|a,b}^2+\tau_2^2)}{m}}.
\end{align*}

By Lemma \ref{lem:proj} and \eqref{eq:split2}, we have
\begin{align*}
\bbE\left[\norm{\hX_2 - X}_\rmF\right] \leq &~ 2 \bbE\left[\norm{\hX_\mathrm{lin} - X}_{\Delta_{12}^\circ}\right] \leq 2 \bbE[T_1] + 2 \bbE[T_2] + 2 \bbE[T_3] + 2 \bbE[T_4] \\
\leq &~ 2\sqrt{\frac{r^2\sigma_{y|a,b}^2+\sigma^2}{m}} + 8\sqrt{\frac{2s_1s_2\log n_1\log n_2 \cdot (\sigma_{y|a,b}^2+\tau_0^2)}{m}} \\
&~ + 4\sqrt{\frac{2s_1s_2\log n_2 \cdot (r\sigma_{y|a,b}^2+\tau_1^2)}{m}} + 4\sqrt{\frac{2s_1s_2\log n_1 \cdot(r\sigma_{y|a,b}^2+\tau_2^2)}{m}}.
\end{align*}
\end{proof}

\subsection{Proof of Corollary \ref{cor:nonlin}}
Since $\hX_3 = P_{\Omega_3} \hX_2$, and $X\in \Omega_3$, we have
\[
\norm{\hX_3-X}_\rmF\leq \norm{\hX_3-\hX_2}_\rmF + \norm{\hX_2-X}_\rmF\leq 2\norm{\hX_2-X}_\rmF.
\]
Similarly, $\hU'\hSigma'\hV'^T = P_{\Omega_r}\hX_3$, and $X\in \Omega_r$, hence
\[
\norm{\hU'\hSigma'\hV'^T-X}_\rmF\leq \norm{\hU'\hSigma'\hV'^T-\hX_3}_\rmF + \norm{\hX_3-X}_\rmF\leq 2\norm{\hX_3-X}_\rmF \leq 4\norm{\hX_2-X}_\rmF.
\]
By Lemma \ref{lem:subspace},
\[
\max\left\{\bbE\left[\norm{\tU^T\hU'}_\rmF\right],~\bbE\left[\norm{\tV^T\hV'}_\rmF\right] \right\} \leq \frac{1}{\sigma_r}\bbE\left[\norm{X-\hU'\hSigma'\hV'^T}_\rmF\right] \leq \frac{4}{\sigma_r} \bbE\left[\norm{\hX_2 - X}_\rmF\right].
\]
Corollary \ref{cor:nonlin} follows from Theorem \ref{thm:nonlin}, and the fact that $r,\sigma_{y|a,b},\sigma,\tau_0,\tau_1,\tau_2$ are all independent of $n_1$, $n_2$, and $m$.


\subsection{Proof of Theorem \ref{thm:optimal}}\label{sec:proof_optimal}

We need the following lemmas for the proof of Theorem \ref{thm:optimal}.
\begin{lemma} \label{lem:max2}
Suppose $\Delta_{r} = (\Omega_{r}-\Omega_{r}) \bigcap \calB_{n_1\times n_2}$. Then
\[
\norm{Y}_{\Delta_{r}^\circ} \leq \min\left\{\norm{Y}_\rmF,~\sqrt{2r} \norm{Y}\right\}.
\]
\end{lemma}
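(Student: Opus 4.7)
The plan is to mirror the proof of Lemma \ref{lem:max}, replacing the sparsity containment with a rank containment. The first bound, $\norm{Y}_{\Delta_r^\circ}\leq \norm{Y}_\rmF$, is immediate from Cauchy--Schwarz exactly as in \eqref{eq:c-s}: for every $X\in\Delta_r$ we have $\norm{X}_\rmF\leq 1$, so $\langle Y,X\rangle\leq \norm{Y}_\rmF\norm{X}_\rmF\leq \norm{Y}_\rmF$.

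For the second bound, the key observation is that if $X_1,X_2\in\Omega_r$, then $\rank(X_1-X_2)\leq 2r$. Therefore
\[
\Delta_r \subset \{X\in\bbR^{n_1\times n_2}: \rank(X)\leq 2r,~\norm{X}_\rmF\leq 1\}.
\]
I will then invoke the trace/operator-norm duality (i.e., the Schatten $\ell_1$--$\ell_\infty$ Hölder inequality) $\langle Y,X\rangle \leq \norm{Y}\cdot\norm{X}_*$, where $\norm{X}_*$ is the nuclear norm and $\norm{Y}$ is the spectral (operator) norm. For any $X$ of rank at most $2r$, Cauchy--Schwarz applied to its singular values gives $\norm{X}_*\leq \sqrt{2r}\,\norm{X}_\rmF$, so for $X\in\Delta_r$ we have $\norm{X}_*\leq \sqrt{2r}$. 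Combining these two inequalities yields
\[
\norm{Y}_{\Delta_r^\circ} = \sup_{X\in\Delta_r}\langle Y,X\rangle \leq \sqrt{2r}\,\norm{Y},
\]
and taking the minimum of the two bounds completes the proof.

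There is no real obstacle here; the argument is a direct analogue of Lemma \ref{lem:max} with the low-rank polar body playing the role of the sparse polar body. The only point that requires attention is using the correct dual norm pairing (operator norm paired with nuclear norm, rather than $\ell_\infty$ paired with $\ell_1$ as in the sparse case), and the rank-doubling inclusion $\Omega_r-\Omega_r\subset\Omega_{2r}$ which is what produces the factor $\sqrt{2r}$.
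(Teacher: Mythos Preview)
Your proposal is correct and follows essentially the same approach as the paper: the paper also bounds via Cauchy--Schwarz for the Frobenius part, then uses the inclusion $\Delta_r\subset\{X:\rank(X)\leq 2r,~\norm{X}_\rmF\leq 1\}\subset\{X:\norm{X}_*\leq\sqrt{2r}\}$ together with the nuclear/spectral H\"older inequality to obtain $\sqrt{2r}\,\norm{Y}$.
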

\begin{proof}
By an argument similar to that in \eqref{eq:c-s}, $\norm{Y}_{\Delta_{r}^\circ} \leq \norm{Y}_\rmF$.
Since
\[
\Delta_{r} \subset \{X\in\bbR^{n_1\times n_2}: \rank(X)\leq 2r, \norm{X}_\rmF\leq 1\} \subset \{X\in\bbR^{n_1\times n_2}: \norm{X}_*\leq \sqrt{2r}\} \eqdef \Delta_*,
\]
By H\"{o}lder's inequality,
\begin{align*}
\norm{Y}_{\Delta_{r}^\circ} = \sup\limits_{X\in\Delta_{r}} \left<Y,X\right>\leq \sup\limits_{X\in\Delta_*} \left<Y,X\right> \leq \sup\limits_{X\in\Delta_*} \norm{X}_*\norm{Y} = \sqrt{2r}\norm{Y}.
\end{align*}
\end{proof}

\begin{lemma} \label{lem:use_exp}
If $\{y_i\}_{i=1}^m$ are i.i.d. light-tailed random variables defined by \eqref{eq:light}, then
\begin{align*}
\bbP\left[\max_i |y_i| > t \log m\right] \leq C m^{1-ct}.
\end{align*}
\end{lemma}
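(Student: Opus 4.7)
The plan is to prove Lemma \ref{lem:use_exp} by a straightforward union bound combined with the light-tailed measurement condition \eqref{eq:light}.

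First, I would apply the union bound to convert the maximum into a sum over individual tail probabilities:
\[
\bbP\left[\max_i |y_i| > t \log m\right] \leq \sum_{i=1}^m \bbP\left[|y_i| > t \log m\right].
\]
Then I would plug in the light-tailed bound from \eqref{eq:light} with the threshold $t\log m$, which gives $\bbP[|y_i| > t\log m] \leq C e^{-ct \log m} = C m^{-ct}$. Summing over $i=1,\ldots,m$ yields the claimed bound $C m^{1-ct}$.

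There is no real obstacle here: the statement is essentially a one-line consequence of the union bound applied to the exponential tail. The only minor subtlety is ensuring the inequality in \eqref{eq:light} (stated for $|y_i| \geq t$) applies equally to the strict inequality $|y_i| > t\log m$, which is immediate since $\bbP[|y_i| > t\log m] \leq \bbP[|y_i| \geq t\log m]$.
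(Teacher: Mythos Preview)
Your proposal is correct and matches the paper's proof essentially line for line: the paper also applies the union bound $\bbP[\max_i |y_i| > t\log m] \le \sum_i \bbP[|y_i| > t\log m]$ and then invokes \eqref{eq:light} to get $m\cdot C e^{-ct\log m} = C m^{1-ct}$.
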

\begin{proof}
\begin{align*}
\bbP\left[\max_i |y_i| > t \log m\right] \leq \sum_i \bbP\left[ |y_i| > t \log m\right] \leq m Ce^{-c t \log m} = C m^{1-ct}.
\end{align*}
\end{proof}

We need the following matrix Bernstein inequality.
\begin{lemma} \label{lem:bern}\cite[Theorem 6.2]{Tropp2011}
Suppose $\{X_i\}_{i=1}^m$ are $n\times n$ symmetric independent random matrices,
\[
\bbE[X_i] = 0,\quad  \bbE\left[X_i^k\right] \preceq \frac{k!}{2}\cdot R^{k-2}A_i^2,\quad \sigma_A^2 \defeq \norm{\sum_i A_i^2}. 
\] 
Then for all $t\geq 0$, we have
\[
\bbP\left[\lambda_{\max}\left(\sum_i X_i\right)\geq t \right] \leq n \cdot \exp\left(\frac{-t^2/2}{\sigma_A^2+Rt}\right).
\]
\end{lemma}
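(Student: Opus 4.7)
The plan is to prove Lemma \ref{lem:bern} via the standard Laplace-transform (matrix Chernoff) route pioneered by Ahlswede--Winter and sharpened by Tropp. First I would apply a matrix Markov step, noting that $e^{\theta \lambda_{\max}(Z)} \leq \operatorname{tr} e^{\theta Z}$ for any Hermitian $Z$ and $\theta>0$, to obtain
\begin{equation*}
\bbP\left[\lambda_{\max}\left(\sum_{i=1}^m X_i\right)\geq t\right] \leq \inf_{\theta>0} e^{-\theta t}\,\bbE\left[\operatorname{tr}\exp\left(\theta\sum_{i=1}^m X_i\right)\right].
\end{equation*}
This reduces the tail bound to controlling the expected trace of the matrix moment generating function (MGF) of the sum.

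Next I would decouple the independent summands inside the trace exponential using Lieb's concavity theorem, which says that $A\mapsto \operatorname{tr}\exp(H+\log A)$ is concave on the positive-definite cone. Conditioning on all but one $X_i$ and applying Jensen's inequality iteratively yields
\begin{equation*}
\bbE\left[\operatorname{tr}\exp\left(\theta\sum_i X_i\right)\right] \leq \operatorname{tr}\exp\left(\sum_i \log \bbE\left[e^{\theta X_i}\right]\right).
\end{equation*}
Then I would bound each log-MGF using the Bernstein-type moment hypothesis. Expanding the series and invoking $\bbE[X_i^k]\preceq \frac{k!}{2}R^{k-2}A_i^2$ gives, for $0<\theta<1/R$,
\begin{equation*}
\bbE[e^{\theta X_i}] \preceq I + \sum_{k\geq 2}\frac{\theta^k}{2}R^{k-2}A_i^2 = I + \frac{\theta^2}{2(1-\theta R)}A_i^2,
\end{equation*}
and the operator inequality $\log(I+S)\preceq S$ for $S\succeq 0$ yields $\log\bbE[e^{\theta X_i}]\preceq \frac{\theta^2}{2(1-\theta R)}A_i^2$.

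Finally, using monotonicity of $\operatorname{tr}\exp(\cdot)$ in the semidefinite order together with $\lambda_{\max}(\sum_i A_i^2)=\sigma_A^2$,
\begin{equation*}
\operatorname{tr}\exp\left(\frac{\theta^2}{2(1-\theta R)}\sum_i A_i^2\right) \leq n \exp\left(\frac{\theta^2\sigma_A^2}{2(1-\theta R)}\right),
\end{equation*}
and I would optimize over $\theta$ by setting $\theta = t/(\sigma_A^2+Rt)$, which after a short calculation gives the exponent $-t^2/[2(\sigma_A^2+Rt)]$ and delivers the stated tail bound. The main obstacle is the decoupling step: because the $X_i$ do not commute, the scalar identity $\bbE[e^{\theta\sum_i X_i}]=\prod_i \bbE[e^{\theta X_i}]$ fails, and one cannot sidestep this with Golden--Thompson beyond two summands. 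Lieb's concavity theorem is precisely the analytic input that yields subadditivity of the log-MGF for arbitrarily many independent Hermitian summands; once that is in place, everything else is scalar calculus and standard operator monotonicity.
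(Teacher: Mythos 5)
The paper does not prove this lemma at all---it is quoted verbatim from Tropp \cite[Theorem 6.2]{Tropp2011}---and your argument correctly reconstructs Tropp's own proof: the matrix Laplace-transform bound, subadditivity of the matrix cumulant generating function via Lieb's concavity theorem, the geometric-series bound $\bbE[e^{\theta X_i}]\preceq I+\frac{\theta^2}{2(1-\theta R)}A_i^2$ from the Bernstein moment hypothesis, and the choice $\theta=t/(\sigma_A^2+Rt)$, which indeed yields the exponent $-t^2/\bigl(2(\sigma_A^2+Rt)\bigr)$. So your route coincides with the source the paper relies on, and the proof is correct.
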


Next, we prove Theorem \ref{thm:optimal}.
\begin{proof}[Proof of Theorem \ref{thm:optimal}]
Similar to \eqref{eq:split2}, we have
\begin{align}
\norm{\hX_\mathrm{lin} - X}_{\Delta_{r}^\circ} \leq &~ \norm{ U \left(\frac{1}{m}\sum_{i=1}^m\ba_i y_i \bb_i^T -Q\right)V^T}_{\Delta_{r}^\circ} + \norm{\tU\left(\frac{1}{m}\sum_{i=1}^m\ta_i y_i \tb_i^T\right)\tV^T}_{\Delta_{r}^\circ}   \nonumber\\
&~ + \norm{U \left(\frac{1}{m}\sum_{i=1}^m\ba_i y_i \tb_i^T \right)\tV^T}_{\Delta_{r}^\circ} + \norm{\tU\left(\frac{1}{m}\sum_{i=1}^m\ta_i y_i \bb_i^T\right)V^T }_{\Delta_{r}^\circ} \nonumber\\
\eqdef &~ T_1+T_2+T_3+T_4. \label{eq:split3}
\end{align}
Similar to \eqref{eq:T1}, $\bbE[T_1] \leq \sqrt{\frac{r^2\sigma_{y|a,b}^2+\sigma^2}{m}}$.

Suppose $u_i\sim N(0,I_{n_1})$, $v_i\sim N(0,I_{n_2})$, $\{u_i\}_{i=1}^m$, $\{v_i\}_{i=1}^m$, and $\{y_i\}_{i=1}^m$, $\{\ba_i\}_{i=1}^m$, $\{\bb_i\}_{i=1}^m$ are independent. Replacing $\tU\ta_i,\tV\tb_i$ in $T_2$ by $u_i,v_i$, by Lemmas \ref{lem:degen_gauss} and \ref{lem:max2},
\begin{align}
\bbE[T_2] \leq \bbE\left[\norm{\frac{1}{m}\sum_{i=1}^m u_i y_i v_i^T}_{\Delta_{r}^\circ}\right]\leq \frac{\sqrt{2r}}{m}~ \bbE\left[ \norm{\sum_{i=1}^m u_iy_i v_i^T}\right]. \label{eq:T2_second}
\end{align}

We give the following concentration of measure bound on the spectral norm in \eqref{eq:T2_second},
\begin{align}
&~ \bbP\left[\norm{\sum_{i=1}^m u_iy_i v_i^T} \geq t^2 \sqrt{(n_1+n_2)m}\cdot\log m\right] \nonumber\\
\leq &~ \bbP\left[\norm{\sum_{i=1}^m u_iy_i v_i^T} \geq t^2 \sqrt{(n_1+n_2)m}\cdot\log m,~\max_i |y_i| \leq t \log m\right] + \bbP\left[\max_i |y_i| > t \log m\right]  \nonumber\\
\leq &~ (n_1+n_2)\cdot \exp\left(\frac{-t^4}{2t^2+6t^3}\right)+Cm^{1-ct}. \label{eq:use_conc}
\end{align}
The bounds on the first and second terms follow from Lemmas \ref{lem:bern} and \ref{lem:use_exp}, respectively. The derivation for the first bound can be found in Appendix \ref{app:spectral}.
By \eqref{eq:use_conc},
\begin{align*}
\bbP\left[\norm{\sum_{i=1}^m u_iy_i v_i^T} \geq t^2 \sqrt{(n_1+n_2)m}\cdot\log m\right]
\leq \begin{cases}
1, & \quad \text{if}~t\leq 8\log(n_1+n_2),\\
(n_1+n_2)\cdot \exp\left(\frac{-t}{8}\right)+2Cm^{1-ct}, & \quad \text{if}~t> 8\log(n_1+n_2).
\end{cases} 
\end{align*}
Hence
\begin{align*}
\bbE\left[ \norm{\sum_{i=1}^m u_iy_i v_i^T}\right] = &~ \int_{0}^{\infty} \bbP\left[\norm{\sum_{i=1}^m u_iy_i v_i^T} \geq x\right] \rmd x\\
\leq &~ \sqrt{(n_1+n_2)m}\cdot \log m\cdot \Big(64\log^2(n_1+n_2)+128\log(n_1+n_2)+128\\
&~ +\frac{16C \log(n_1+n_2)}{c\log m\cdot m^{8c\log(n_1+n_2)-1}}+\frac{2C}{c^2 \log^2m \cdot m^{8c\log(n_1+n_2)-1}} \Big)\\
\leq &~ 256(C+2) \sqrt{(n_1+n_2)m}\cdot \log m \cdot\log^2(n_1+n_2).
\end{align*}
The derivation is tedious but elementary, in which the assumptions $c>\frac{1}{8\log(n_1+n_2)}$ and $m>n_1+n_2$ are invoked.
By \eqref{eq:T2_second},
\begin{align*}
\bbE[T_2] \leq  \frac{\sqrt{2r}}{m}~ \bbE\left[ \norm{\sum_{i=1}^m u_iy_i v_i^T} \right] \leq 256\sqrt{2}(C+2) \sqrt{\frac{(n_1+n_2)r\log^2m\log^4(n_1+n_2)}{m}}
\end{align*}

It is easy to obtain bounds on the expectations of $T_3$ and $T_4$.
\begin{align}
\bbE[T_3] \leq &~ \bbE\left[\norm{\frac{1}{m}\sum_{i=1}^m U\ba_i y_i v_i^T}_{\Delta_{r}^\circ}\right] \leq \frac{1}{m}~ \bbE\left[\norm{\sum_{i=1}^m (\ba_i y_i)v_i^T}_\rmF\right] \label{eq:T3}\\
\leq &~ \frac{1}{m}~ \sqrt{\bbE\left[\norm{\sum_{i=1}^m (\ba_i y_i)v_i^T}_\rmF^2\right]}\nonumber\\
\leq &~ \frac{1}{m}~ \sqrt{m \bbE\left[\norm{\ba_i y_i}_2^2\right] \bbE\left[\norm{v_i}_2^2\right]} \nonumber\\
\leq &~ \sqrt{\frac{n_2 (r\sigma_{y|a,b}^2+\tau_1^2)}{m}}, \nonumber\\
\bbE[T_4] \leq &~ \sqrt{\frac{n_1 (r\sigma_{y|a,b}^2+\tau_2^2)}{m}}. \nonumber
\end{align}

By Lemma \ref{lem:proj} and \eqref{eq:split3}, we have
\begin{align*}
\bbE\left[\norm{\hU\hSigma\hV^T - X}_\rmF\right] \leq &~ 2 \bbE\left[\norm{\hX_\mathrm{lin} - X}_{\Delta_{r}^\circ}\right] \leq 2 \bbE[T_1] + 2 \bbE[T_2] + 2 \bbE[T_3] + 2 \bbE[T_4] \\
\leq &~ 2\sqrt{\frac{r^2\sigma_{y|a,b}^2+\sigma^2}{m}} + 512\sqrt{2}(C+2) \sqrt{\frac{(n_1+n_2)r\log^2m\log^4(n_1+n_2)}{m}} \\
&~ + 2\sqrt{\frac{n_2 (r\sigma_{y|a,b}^2+\tau_1^2)}{m}} + 2\sqrt{\frac{n_1 (r\sigma_{y|a,b}^2+\tau_2^2)}{m}}.
\end{align*}
\end{proof}


\subsection{Spectral Norm Bound}\label{app:spectral}
In this section, we prove the first bound in \eqref{eq:use_conc}. We have
\begin{align}
&~ \bbP\left[\norm{\sum_{i=1}^m u_iy_i v_i^T} \geq t^2 \sqrt{(n_1+n_2)m}\cdot\log m,~\max_i |y_i| \leq t \log m~\middle |~\{y_i\}_{i=1}^m \right] \nonumber\\
= &~ \bbP\left[\norm{\sum_{i=1}^m u_iy_i v_i^T} \geq t^2 \sqrt{(n_1+n_2)m}\cdot\log m~\middle |~\{y_i\}_{i=1}^m \right] \cdot \ind{\max_i |y_i| \leq t \log m} \nonumber\\
\leq &~ (n_1+n_2)\cdot \exp\left(\frac{-t^4 (n_1+n_2)m\log^2 m /2}{t^2(n_1+n_2)m\log^2 m+et^3 (n_1+n_2)m\log^2 m}\right) \label{eq:use_bern}\\
\leq &~ (n_1+n_2)\cdot \exp\left(\frac{-t^4}{2t^2+6t^3}\right), \nonumber
\end{align}
Next, we show how \eqref{eq:use_bern} follows from the matrix Bernstein inequality in Lemma \ref{lem:bern}. 
The rest of the derivation is conditioned on $\{y_i\}_{i=1}^m$ that satisfy $\max_i |y_i| \leq t \log m$, hence $\sum_i y_i^2 \leq t^2 m (\log m)^2$. Define $(n_1+n_2)\times (n_1+n_2)$ matrices ($i=1,2,\cdots,m$):
\[
X_i = \begin{bmatrix}
0 & u_iy_i v_i^T\\
v_iy_i u_i^T & 0
\end{bmatrix}.
\]
They satisfy
\[
\lambda_{\max}\left(\sum_i X_i\right) = \norm{\sum_i u_i y_i v_i^T},
\]
\[
\bbE[X_i] = 0,\quad \bbE\left[X_i^k\right]  = 0, \quad \text{if $k$ is odd,}
\]
\begin{align*}
\bbE\left[X_i^k\right] 
= &~ y_i^k (n_1+2)\cdots(n_1+k-2)(n_2+2)\cdots(n_2+k-2)\begin{bmatrix}
n_2I_{n_1} & 0\\
0 & n_1I_{n_2}
\end{bmatrix}\\
\preceq &~ \frac{k!}{2}\left[e(n_1+n_2)\max_i|y_i|\right]^{k-2} \begin{bmatrix}
y_i^2 n_2I_{n_1} & 0\\
0 & y_i^2 n_1I_{n_2}
\end{bmatrix},  \quad \text{if $k$ is even.}
\end{align*}

Let $R= e(n_1+n_2)\max_i|y_i| \leq et (n_1+n_2)\log m\leq et \sqrt{(n_1+n_2)m}\cdot \log m$,
\[
A_i^2 = \begin{bmatrix}
y_i^2 n_2I_{n_1} & 0\\
0 & y_i^2 n_1I_{n_2}
\end{bmatrix},
\]
and $\sigma_A^2 = \sum_i y_i^2 \max\{n_1,n_2\}\leq t^2(n_1+n_2)m(\log m)^2$. Then \eqref{eq:use_bern} follows from Lemma \ref{lem:bern}.

\subsection{Mildness of the Light-tailed Measurement Condition}

In this section, we demonstrate that this condition holds under reasonably mild assumptions on $f(\cdot,\cdot)$ and $y-\mu$. To this end, we review a known fact: a probability distribution is light-tailed if its moment generating function is finite at some point. This is made more precise in Proposition \ref{pro:light}, which follows trivially from Chernoff bound.
\begin{proposition}\label{pro:light}
Let $M_y(t) = \bbE\left[e^{ty}\right]$ denote the moment generating function of a random variable $y$. Then $y$ is a light-tailed random variable, if
\begin{itemize}
\item there exist $t_1>0$ and $t_2<0$ such that $M_y(t_1)<\infty$ and $M_y(t_2)<\infty$.
\item $y\geq 0$ almost surely, and there exists $t_1>0$ such that $M_y(t_1)<\infty$.
\item $y\leq 0$ almost surely, and there exists $t_2<0$ such that $M_y(t_2)<\infty$.
\end{itemize}
\end{proposition}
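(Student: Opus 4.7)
The plan is to derive each bullet from the standard Chernoff argument: for any $s>0$, the function $x\mapsto e^{sx}$ is increasing, so by Markov's inequality applied to $e^{sy}$, we obtain $\bbP[y\geq t]=\bbP[e^{sy}\geq e^{st}]\leq M_y(s)\,e^{-st}$. Symmetrically, for any $s<0$, $x\mapsto e^{sx}$ is decreasing, so $\bbP[y\leq -t]=\bbP[e^{sy}\geq e^{-st}]\leq M_y(s)\,e^{st}$. These two inequalities are the only tools needed.

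For the first bullet, I would pick $s=t_1>0$ to bound the right tail and $s=t_2<0$ to bound the left tail, giving
\[
\bbP[|y|\geq t]\leq \bbP[y\geq t]+\bbP[y\leq -t]\leq M_y(t_1)e^{-t_1 t}+M_y(t_2)e^{t_2 t},
\]
valid for $t\geq 0$. Choosing $c=\min\{t_1,-t_2\}>0$ and $C=M_y(t_1)+M_y(t_2)<\infty$ then yields $\bbP[|y|\geq t]\leq C e^{-ct}$, which is precisely condition \eqref{eq:light}.

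For the second bullet, since $y\geq 0$ almost surely, $|y|=y$ and $\bbP[|y|\geq t]=\bbP[y\geq t]\leq M_y(t_1)e^{-t_1 t}$, so take $C=M_y(t_1)$ and $c=t_1$. For the third bullet, since $y\leq 0$ almost surely, $|y|=-y$ and $\bbP[|y|\geq t]=\bbP[y\leq -t]\leq M_y(t_2)e^{t_2 t}$, so take $C=M_y(t_2)$ and $c=-t_2$.

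There is no real obstacle here; the only care needed is the sign bookkeeping in the Chernoff step for negative $s$, where multiplying the inequality $y\leq -t$ by $s<0$ flips the direction to $sy\geq -st$ before the monotone exponential map is applied. Everything else is elementary, consistent with the remark in the statement that the proposition follows trivially from the Chernoff bound.
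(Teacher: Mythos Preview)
Your proposal is correct and follows exactly the route the paper indicates: the paper does not give a detailed proof but simply remarks that the proposition ``follows trivially from Chernoff bound,'' which is precisely the Markov-inequality-on-$e^{sy}$ argument you have written out, with the same sign bookkeeping for the two tails.
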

In the context of this paper, we have the following corollary:
\begin{corollary}\label{cor:light}
Suppose $f(\ba,\bb)$ satisfies $|f(\ba,\bb)|\leq \max\left\{C_1,~C_2\left(\norm{\ba}_2^2+\norm{\bb}_2^2\right)\right\}$ for some $C_1,C_2>0$, and $y-\mu = y-f(\ba,\bb)$ is a light-tailed random variable. Then $y$ is a light-tailed random variable.
\end{corollary}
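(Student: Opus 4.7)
The plan is to apply Proposition~\ref{pro:light} by showing that the moment generating function $M_y(t) \defeq \bbE[e^{ty}]$ is finite for all sufficiently small $|t|$, which gives both a positive and a negative point of finiteness as required by the first bullet of the proposition.

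First I would decompose $y = (y - \mu) + \mu$, where $\mu = f(\ba,\bb)$, and bound the MGF by Cauchy--Schwarz:
\[
M_y(t) = \bbE\bigl[e^{t(y-\mu)} \cdot e^{t\mu}\bigr] \leq \sqrt{\bbE\bigl[e^{2t(y-\mu)}\bigr] \cdot \bbE\bigl[e^{2t\mu}\bigr]}.
\]
The first factor is finite for all sufficiently small $|t|$ because $y - \mu$ is assumed to be light-tailed (the tail bound \eqref{eq:light} readily implies finiteness of the MGF on a symmetric interval around $0$ via integration by parts).

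For the second factor, I would use the growth hypothesis $|f(\ba,\bb)| \leq \max\{C_1,\, C_2(\norm{\ba}_2^2 + \norm{\bb}_2^2)\}$ to write
\[
e^{2t\mu} \leq e^{2|t|\cdot|\mu|} \leq e^{2|t|C_1} + e^{2|t|C_2\norm{\ba}_2^2} \cdot e^{2|t|C_2\norm{\bb}_2^2}.
\]
By Lemma~\ref{lem:r_ind}, $\ba$ and $\bb$ are independent with $\ba\sim N(0,I_r)$ and $\bb\sim N(0,I_r)$, so $\norm{\ba}_2^2$ and $\norm{\bb}_2^2$ are independent chi-squared random variables with $r$ degrees of freedom. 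Their MGFs satisfy $\bbE[e^{s\norm{\ba}_2^2}] = (1-2s)^{-r/2}$ for $s < 1/2$, and the same for $\bb$. Taking expectations and using independence,
\[
\bbE\bigl[e^{2t\mu}\bigr] \leq e^{2|t|C_1} + (1 - 4|t|C_2)^{-r},
\]
which is finite whenever $|t| < 1/(4C_2)$.

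Combining the two factors shows $M_y(t) < \infty$ on a nonempty symmetric interval around the origin, so we may pick $t_1 > 0$ and $t_2 < 0$ in this interval and invoke Proposition~\ref{pro:light} to conclude that $y$ is light-tailed. I do not anticipate any serious obstacle: the only mildly delicate point is that the residual $y - \mu$ need not be independent of $\mu$, which is exactly why Cauchy--Schwarz is the natural tool to decouple the two contributions without assuming the stronger Markov structure \eqref{eq:markov2}.
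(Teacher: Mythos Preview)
Your argument is correct. The MGF bound for $\mu$ via the chi-squared MGF is exactly the computation the paper carries out, and your Cauchy--Schwarz step is a valid way to decouple $y-\mu$ from $\mu$ without any independence assumption.

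The paper takes a slightly different (and marginally simpler) route to the same destination: instead of bounding $M_y(t)$ directly, it works at the level of tail probabilities, using
\[
\bbP\bigl[|y|\geq t\bigr] \leq \bbP\bigl[|\mu|\geq t/2\bigr] + \bbP\bigl[|y-\mu|\geq t/2\bigr],
\]
so it suffices to show that $\mu$ alone is light-tailed; this is then done by bounding $M_\mu(t)$ via the same Gaussian/chi-squared integral you use. The tail-splitting step is a union bound and needs no independence either, so it achieves the same decoupling as your Cauchy--Schwarz but without the factor-of-two loss in the $t$ range (the paper gets finiteness for $|t|<1/(2C_2)$ rather than $|t|<1/(4C_2)$). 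Both approaches are short and equivalent in strength for the stated corollary; your version has the minor aesthetic advantage of staying entirely in MGF language, while the paper's has the minor advantage of being a touch more elementary.
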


\begin{proof}
Since $\bbP\left[\left|y\right|\geq t\right] \leq \bbP\left[\left|\mu\right|\geq t/2\right]+\bbP\left[\left|y-\mu\right|\geq t/2\right]$, and $y-\mu$ is light-tailed, it is sufficient to show that $\mu$ is light-tailed. The moment generating function of $\mu$ is
\begin{align*}
M_\mu(t) = &~ \bbE[e^{tf(\ba,\bb)}] \leq  \bbE[e^{|t|\cdot|f(\ba,\bb)|}] \leq e^{C_1|t|}\bbE\left[e^{C_2|t|\left(\norm{\ba}_2^2+\norm{\bb}_2^2\right)}\right] \\
= &~ \frac{e^{C_1|t|}}{(2\pi)^r} \int\limits_{\bb}\int\limits_{\ba} e^{\left(C_2|t|-\frac{1}{2}\right)\left(\norm{\ba}_2^2+\norm{\bb}_2^2\right)}~ \rmd \ba ~\rmd \bb,
\end{align*}
which is finite for $|t|<\frac{1}{2C_2}$. By Proposition \ref{pro:light}, $\mu$ is light-tailed. Thus the proof is complete.
\end{proof}

\end{document}